\definecolor{Gray}{gray}{0.93} 
\definecolor{light-gray}{gray}{0.6}
\definecolor{light-gray-2}{gray}{0.6}
\definecolor{GrayMedium}{gray}{0.85}
\newtheorem{theorem}{Theorem}
\crefname{section}{Sec.}{Secs.}
\Crefname{section}{Section}{Sections}
\Crefname{table}{Table}{Tables}
\crefname{table}{Tab.}{Tabs.}
\crefname{figure}{Fig.}{Figs.} 
\crefname{equation}{Eq.}{Eqs.}
\crefname{thm}{Thm.}{Thm.}
\crefname{algorithm}{Alg.}{Alg.}
\newcommand{\eg}{{\em e.g.}}           %
\newcommand{\ie}{{\em i.e.}}           %
\newtheorem*{theorem-repeat}{Theorem}
\newcommand{\pmhalf}[2]{%
  #1%
  \stackengine{0pt}{\vphantom{#1}}{%
    \shortstack[c]{%
      \scalebox{0.4}[0.4]{$\pm$}\\[-0.5ex]%
      \scalebox{0.5}[0.5]{#2}%
    }%
  }{O}{c}{F}{F}{L}%
}
\definecolor{light-gray}{gray}{0.6}
\definecolor{light-gray-2}{gray}{0.6}
\definecolor{GrayMedium}{gray}{0.85}
\begin{document}

\title{An Adaptor for Triggering Semi-Supervised Learning to Out-of-Box Serve \\ Deep Image Clustering}

\author{
  Yue Duan,
  Lei Qi,
  Yinghuan Shi$^*$,
  Yang Gao
  \thanks{}%
  \thanks{Yue Duan, Yinghuan Shi, and Yang Gao are with the State Key Laboratory for Novel Software Technology, Nanjing University, Nanjing 210023, China (e-mail: yueduan@smail.nju.edu.cn; syh@nju.edu.cn; gaoy@nju.edu.cn).}
  \thanks{Lei Qi is with the School of Computer Science and Engineering, Southeast University, Nanjing 211189, China (e-mail: qilei@seu.edu.cn).}
  \thanks{
    $^*$Corresponding author: Yinghuan Shi.
    \\
     \textbf{Acknowledgments}: This work is supported by NSFC Project (624B2063, 62222604, 62206052, 62536005, 62192783), Jiangsu Frontier Technology R\&D Project (BF2025061), Jiangsu Science and Technology Major Project (BG2024031), Fundamental Research Funds for the Central Universities (020214380120, 020214380128), State Key Laboratory Fund (ZZKT2024A14, ZZKT2025B05), China Postdoctoral Science Foundation (2024M750424), Jiangsu Funding Program for Excellent Postdoctoral Talent (2024ZB242) and Postdoctoral Fellowship Program of CPSF (GZC20240252).
  }
}
 
\markboth{IEEE TRANSACTIONS ON IMAGE PROCESSING}
{Duan \MakeLowercase{\textit{et al.}}: An Adaptor for Triggering Semi-supervised Learning to Out-of-Box Serve Deep Clustering}

\maketitle

\begin{abstract}
  \justifying
  \label{sec_abstract}
  Recently, some works integrate SSL techniques into deep clustering frameworks to enhance image clustering performance. However, they all need pretraining, clustering learning, or a trained clustering model as prerequisites, limiting the flexible and out-of-box application of SSL learners in the image clustering task. This work introduces ASD, an adaptor that enables the cold-start of SSL learners for deep image clustering without any prerequisites. Specifically, we first randomly sample pseudo-labeled data from all unlabeled data, and set an instance-level classifier to learn them with semantically aligned instance-level labels. With the ability of instance-level classification, we track the class transitions of predictions on unlabeled data to extract high-level similarities of instance-level classes, which can be utilized to assign cluster-level labels to pseudo-labeled data. Finally, we use the pseudo-labeled data with assigned cluster-level labels to trigger a general SSL learner trained on the unlabeled data for image clustering. We show the superior performance of ASD across various benchmarks against the latest deep image clustering approaches and very slight accuracy gaps compared to SSL methods using ground-truth, \eg, only 1.33\% on CIFAR-10. Moreover, ASD can also further boost the performance of existing SSL-embedded deep image clustering methods.
\end{abstract}
\begin{IEEEkeywords}
  Semi-supervised learning, Unsupervised learning, Deep clustering, Image clustering
\end{IEEEkeywords}

\section{Introduction}
\label{sec:intro}

\IEEEPARstart{I}{mage} clustering involves organizing images into semantically meaningful groups based on similarity measures, and it is a crucial unsupervised learning technique widely applied in tasks such as action localization \cite{liu2023revisiting}, image retrieval \cite{tesfaye2020constrained}, segmentation \cite{cho2021picie}, and detection \cite{lin2023explore}. Among various unsupervised methods, \textbf{\textit{deep clustering (DC)}}—leveraging the strong representational power of deep neural networks—has become increasingly prominent \cite{xie2016unsupervised,chang2017deep,dang2021nearest,wang2023self}.

Recently, to further improve clustering performance, several DC frameworks have started integrating powerful \textbf{\textit{semi-supervised learning (SSL)}} methods in their later stages \cite{van2020scan,park2021improving,niu2022spice}. SSL techniques have gained popularity due to their capability to achieve strong discriminative feature representations by exploiting large amounts of unlabeled data guided by very limited labeled samples \cite{lee2013pseudo,berthelot2020remixmatch,sohn2020fixmatch,wang2020enaet,duan2022mutexmatch,wang2023freematch}.
However, despite promising integration attempts, current DC methods relying on SSL have inherent limitations: they typically depend heavily on a pretrained clustering head or require explicit preliminary clustering training to generate reliable pseudo-labels. Without this pretraining stage, SSL methods embedded within existing DC frameworks lose their ability to produce stable cluster-level pseudo-labels (as illustrated in Fig. \ref{fig:intro}), significantly limiting the direct applicability and flexibility of these integrated approaches. We discover the difficulty in directly applying SSL for deep image clustering lies in the absence of cluster-level labels to serve as supervision, while traditional SSL requires certain labeled data to provide supervision signals for the learning of unlabeled data. In previous works, SPICE \cite{niu2022spice}, which filters prototypes for reliable cluster-level pseudo-labeling based on predictive confidence, but this all relies on the training of the front-loaded clustering head. In our scenario of cold-starting SSL, measuring predictive confidence is meaningless for a freshly initialized SSL network. For another example, RUC \cite{park2021improving} acquires pseudo-labeled data based on a trained clustering model.

\begin{figure}[t]
  \centering
  \resizebox{\linewidth}{!}{   \includegraphics[]{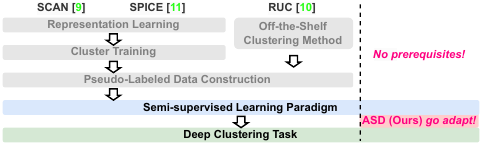}}
  \caption{Different of training framework between SSL-embedded deep image clustering methods and our ASD. SCAN \cite{van2020scan} and SPICE \cite{niu2022spice} utilize representation learning for better feature followed by clustering training, assigning cluster-level pseudo-labels to initiate SSL on unlabeled data. Conversely, RUC \cite{park2021improving} directly incorporates a well-trained deep clustering model to immediately predict cluster-level labels.
   } 
  \label{fig:intro}
\end{figure}

\begin{figure}[t]
  \centering
  \subfloat[]{
  \includegraphics[width=4cm,height=3cm]{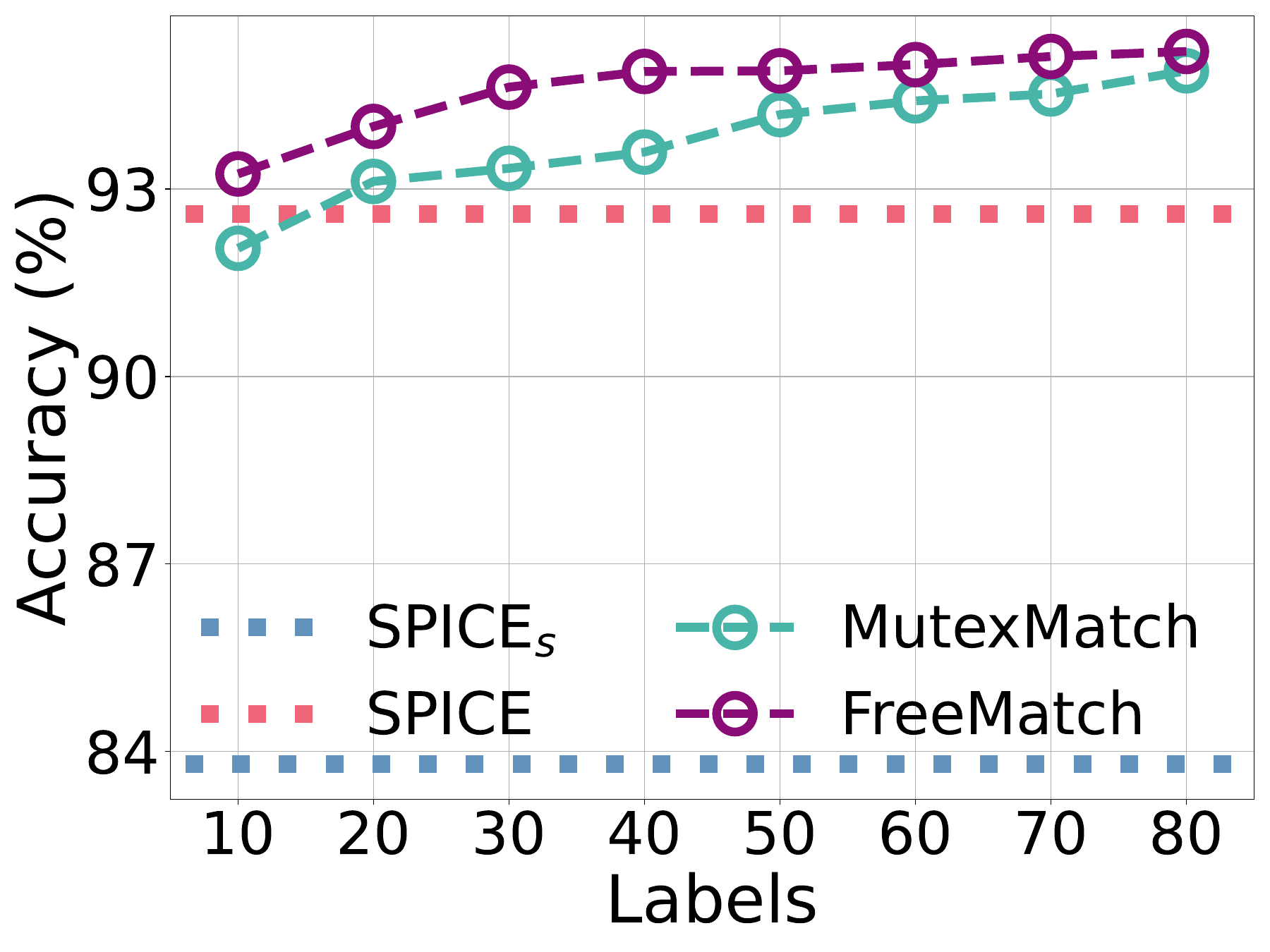}
  \label{fig:intro_exp_1}
  }
\hfil
  \subfloat[]{
  \includegraphics[width=4cm,height=3cm]{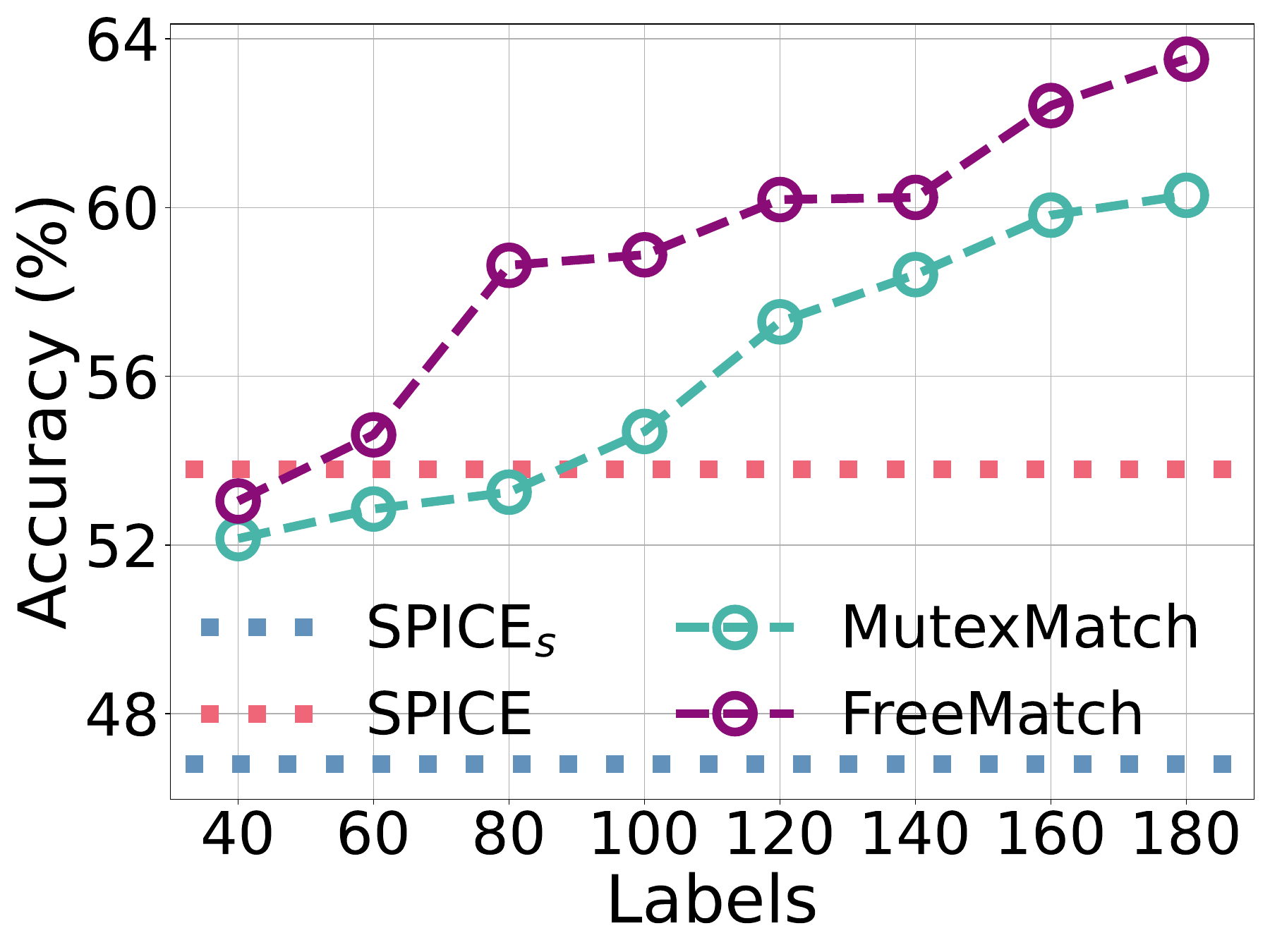}
  \label{fig:intro_exp_2}
  }
  \caption{Experimental observation: advanced SSL method FreeMatch \cite{wang2023freematch} requires only 10 labels to outperform the latest SSL-embedded DC method SPICE \cite{niu2022spice} on CIFAR-10.}
  \label{fig:intro-exp}
\end{figure}

In fact, advanced SSL methods alone often achieve sufficiently satisfactory clustering results without complex DC-specific mechanisms (see empirical evidence in Fig. \ref{fig:intro-exp}). Thus, a natural and intriguing question arises: \textit{Can we directly employ state-of-the-art SSL methods to perform deep image clustering without relying on separate clustering training or labeled data, thereby bridging SSL and DC in a simpler and more efficient manner?} Motivated by this critical insight, we propose \textbf{ASD}, an \textbf{A}daptor for triggering \textbf{S}emi-supervised learning frameworks to perform out-of-the-box \textbf{D}eep image clustering. Specifically, ASD enables the direct utilization of advanced SSL methods in deep clustering tasks without pretraining or explicit clustering head training. To achieve this goal, we identify a fundamental challenge: the \textbf{cold-start problem}, \ie, without pretrained clustering heads or labeled data, directly applying SSL is difficult, as SSL methods inherently rely on initial labeled samples to start effective learning.

To address this, ASD comprises two main components: an instance-level classifier \(G_{ins}\), trained on pseudo-labeled data with instance-level labels, and a cluster-level classifier \(G_{clu}\), trained with stable cluster-level labels mapped from these instance-level labels. Initially, a small subset of pseudo-labeled data is sampled in a manner that ensures representative and comprehensive semantic coverage as much as possible, with each sample assigned a unique instance-level label to provide initial discriminative capability. However, as pseudo-labeled samples are resampled in each iteration, the semantic meanings of instance-level labels may become inconsistent across iterations. To resolve this issue, we utilize optimal transportation to semantically align newly sampled pseudo-labeled data to previously established instance-level classes, maintaining semantic consistency. Next, the crucial challenge is mapping these instance-level pseudo-labels into stable cluster-level labels for clustering supervision. To address this, we propose \textbf{C}lass \textbf{T}ransition \textbf{T}racking (CTT) based label mapping inspired by \cite{duan2023towards}, which clusters instance-level labels at the class-level rather than the sample-level. Briefly, CTT leverages semantic transitions occurring between instance-level classes during SSL training as a similarity measure (see Sec. \ref{sec:ctt} for details), ensuring stable and meaningful semantic grouping for reliable cluster-level supervision. The proposed ASD thus elegantly bridges SSL and DC by addressing the cold-start problem comprehensively, ensuring both semantic consistency and enhanced representativeness, while remaining flexible enough to incorporate future SSL advancements or pretraining strategies seamlessly.

Our main contributions are summarized as follows: \begin{itemize} \item [(1)] We formally propose ASD, a general adaptor framework enabling direct out-of-the-box integration of advanced SSL methods for deep image clustering without requiring pretraining or labeled data. \item [(2)] We introduce a pseudo-labeled data sampling with semantic alignment and CTT-based
label mapping, explicitly addressing the challenges of representativeness and semantic consistency in pseudo-label assignment. \item [(3)] Extensive experiments demonstrate that ASD significantly outperforms or is highly competitive with state-of-the-art DC methods, providing strong empirical validation of our method's effectiveness, robustness, and flexibility in incorporating SSL advancements. \end{itemize}

\section{Related Work}
\label{sec:related}

\subsection{Deep Clustering}
In the realm of deep clustering (DC) research for image clustering task, alternate and simultaneous training techniques have been put forward to boost the clustering performance.  Representative methods like DEC \cite{xie2016unsupervised} first learns an initial feature representation using an autoencoder, and then jointly refines the feature representation and cluster assignments. 
IDEC \cite{guo2017improved} is an extension to DEC, which incorporates a reconstruction loss into the objective function. IDEC further improves the clustering performance by preserving the local structure of the data during the training process. 
JULE \cite{yang2017towards} learns hierarchical cluster assignments and feature representations in an end-to-end manner, leading to improved performance in image clustering tasks. 
In addition, \cite{chang2017deep,chang2019deep,wu2019deep,van2020scan,dang2021nearest} are also typical  approaches, which iteratively refine clustering assignments and optimizing deep neural networks to learn better data representations. It is worth noting that DC has significantly advanced through self-supervised representation learning \cite{bengio2006greedy,ji2019invariant, van2020scan}. For instance, IIC \cite{ji2019invariant} maximizes the mutual information between input images and their cluster assignments. Meanwhile, contrastive learning plays the most important role in simultaneously exploiting discriminative feature representations for DC \cite{li2020prototypical,van2020scan,dang2021nearest,dang2021doubly,niu2022spice,huang2022learning,zhang2024deep}. For examples, SCAN \cite{van2020scan} and NNM \cite{dang2021nearest} pretrain an unsupervised representation learning model with contrastive learning loss, DCDC \cite{dang2021doubly} performs contrastive learning on both sample and class views for more generalizable representation features. 
and SwAV \cite{caron2020unsupervised} uses online clustering to computes cluster assignments for different data augmentations and minimizes the difference between these assignments, yielding strong performance in various downstream tasks. 

\begin{figure*}[t]
  \centering
  \resizebox{0.9\linewidth}{!}{\includegraphics[]{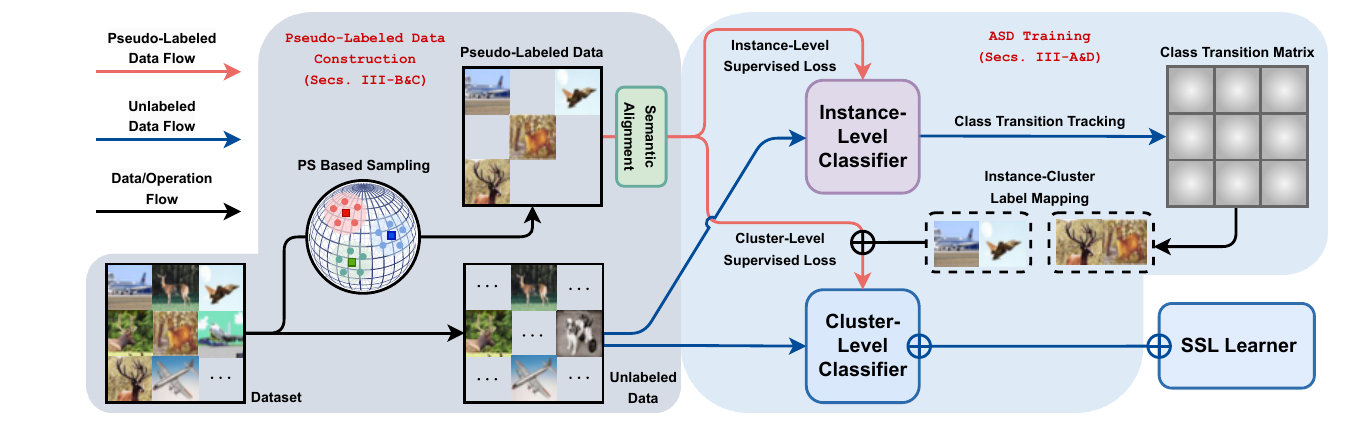}}
  \caption{\textbf{Overview of the proposed ASD.} At the iteration $t$, we sample pseudo-labeled data $x_{l}^{t}$ from the original unlabeled dataset (Secs. \ref{sec:plds} and \ref{sec:proto}). $x_{l}^{t}$ are respectively treated as independent classes and assigned the semantically aligned instance-level labels $y_{l}^{t}$ (see Secs. \ref{sec:ilt} and \ref{sec:sa}). The unsampled data in the dataset are considered as unlabeled data (denoted as $x_{u}^{t}$) in the context of SSL. We use $x_{l}^{t}$ with $y_{l}^{t}$ to train an instance-level classifier and perform class transition tracking on it for $x_{u}^{t}$, enabling us to utilize the information learned from $x_{u}^{t}$ to obtain the cluster-level labels $\phi_{l}(x_{l}^{t})$ of $x_{l}^{t}$ (see Sec. \ref{sec:ctt}). Then, we train a cluster-level classifier with $x_{l}^{t}$ and $\phi_{l}(x_{l}^{t})$ to predict the cluster-level pseudo-labels for $x_{u}^{t}$, so that we can use this classifier to cold-start a generic SSL learner for deep clustering.}
  \label{fig:fw}
\end{figure*}

\subsection{Semi-supervised Learning}

Semi-supervised learning (SSL) is a highly promising paradigm that aims to learn from a large volume of unlabeled data with the help of limited labeled data. A commonly adopted pipeline in SSL involves training a model using labeled data and then using the model’s predictions as pseudo-labels to supervise the unlabeled portion \cite{lee2013pseudo,sohn2020fixmatch}. To enhance the effectiveness of this paradigm, techniques such as consistency regularization \cite{berthelot2019mixmatch,berthelot2020remixmatch,sohn2020fixmatch,duan2022mutexmatch}, contrastive learning \cite{li2021comatch,yang2022class}, and ensemble/mutual learning \cite{li2019semi, wu2019mutual} have been introduced. The former encourages prediction stability under data augmentations, while the latter improves feature representation by aligning positive pairs and separating negatives in the embedding space.

Recent research has focused on improving the quality and reliability of pseudo-labels. For instance, FlatMatch \cite{huang2023flatmatch} uses a cross-sharpness alignment mechanism that penalizes inconsistent predictions, ensuring SSL models do not overfit and remain generalizable. Another line of work detects and mitigates noisy pseudo-labels. DLG \cite{li2023diverse} uses a co-distillation framework to filter noisy labels by cross-verification for robust training. This is conceptually related to DivideMix \cite{li2020dividemix}, which separates clean and noisy samples. NoiseGPT \cite{wang2024noisegpt} detects label noise using probability curvature to measure prediction flatness and correct erroneous labels. Other works like FlexMatch \cite{zhang2021flexmatch} and AdaMatch \cite{berthelot2021adamatch} use adaptive confidence thresholds to refine pseudo-label selection.

Beyond SSL classification, SSL principles have been combined with clustering, for example, in semi-supervised domain adaptation \cite{li2021cross, li2023adaptive}. Works combining DC with SSL for image clustering include SCAN \cite{van2020scan}, RUC \cite{park2021improving}, and SPICE \cite{niu2022spice}. In SCAN, high-confidence samples receive pseudo-labels based on their cluster prediction. RUC considers existing cluster results as a noisy dataset, cleaning samples and then retraining with a refined dataset. SPICE optimizes its network in three stages: training the feature model with contrastive learning, refining cluster semantics with prototype pseudo-labeling, and enhancing performance with reliable pseudo-labeling. In this work, we explore deep image clustering from a new perspective, based on SSL models driven by a self-constructed supervised signal.

\section{Method}
\label{sec:method}
\subsection{ASD}
\label{sec:ssldc}

\noindent\textbf{Preliminary 1: Deep Clustering (DC).}
Given  the unlabeled dataset $\mathcal{D}=\{x^{(1)},\cdots$$,x^{(n)}\}$, we aim to learn a function $f_\theta$ parameterized by $\theta$ (the parameters of the deep neural network) and a set of cluster assignments $\mathcal{C} = \{c^{(1)}, \cdots, c^{(n)}\}$. We assume that $\mathcal{D}$ has $k$ clusters and $k$ is known, \ie, $c^{(i)}\in \mathcal{K}= \{1,\cdots,k\}$. The goal of deep clustering can be reviewed as an optimization task:
\begin{equation}
    \min_{\theta, \mathcal{C}} \sum_{i=1}^{n} \mathcal{L}_{clu}(f_\theta(x^{(i)}), c^{(i)}),
    \label{eq:dc}
\end{equation}
where $\mathcal{L}_{clu}$ is a loss function that encourages similar data points to have the same cluster assignments and dissimilar data points to have different cluster assignments. $\mathcal{L}_{clu}$ could be, for instance, the cross-entropy loss if the cluster assignments are treated as class labels. 

\noindent\textbf{Preliminary 2: Semi-supervised Learning (SSL).} Given the labeled data $x^{(i)}_{l}$ with  corresponding labels $y^{(i)}_{l}$ and the unlabeled data $x^{(i)}_{u}$, we present the loss function of standard self-training-based SSL learner constructed by a feature extractor $F(\cdot)$ and a classifier $G_{clu}(\cdot)$:
\begin{align}\mathcal{L}_{ssl}&=\sum_i\mathcal{L}_{sup}(G_{clu}(F(x^{(i)}_{l})), y^{(i)}_{l})) \nonumber \\
&+\sum_i\mathcal{L}_{unsup}(G_{clu}(F(x^{(i)}_{u})),\phi_{p}(G_{clu}(F(x^{(i)}_{u})))),
\label{eq:ssl}
\end{align}
where $\mathcal{L}_{sup}$ is the supervised loss $\mathcal{L}_{unsup}$ is the unsupervised loss and $\phi_{p}(\cdot)$ is a pseudo-label assignment function for $x^{(i)}_{u}$. 

In order to transform the clustering task into a SSL task, the first step is to provide labeled data to the SSL model. Thus, we first sample $n_{l}$ pseudo-labeled data $x^{(i)}_{l}\in\mathcal{D}_{l}=\{x^{(1)}_{l},\cdots,x^{(n_{l})}_{l}\}$ from $\mathcal{D}$  (Secs. \ref{sec:plds} and \ref{sec:proto}) and assign them cluster-level labels $\phi_{l}(x^{(i)}_{l})$ based on class transition tracking (Sec. \ref{sec:ctt}), where $\phi_{l}(\cdot)$ is the assignment function.  The remaining $n_{u}$ samples  $x^{(i)}_{u}\in\mathcal{D}_{u}= \mathcal{D}\setminus \mathcal{D}_{l}=\{x^{(1)}_{u},\cdots,x^{(n_{u})}_{u}\}$  are regarded as the unlabeled data. Regarding $G_{clu}$ as \textit{cluster-level classifier}, we compute the soft cluster assignment $p^{(i)}=G_{clu}(F(x^{(i)}_{u}))$ for $x^{(i)}_{u}$, where $p^{(i)}_{j}$ can be seen as the probability of sample $x^{(i)}_{u}$ being assigned to cluster $j$ and $j\in \mathcal{K}$. Then, Eq. (\ref{eq:dc}) can be rewritten as 
\begin{equation}
    \min_{F, G_{clu}}\left(\sum_{i=1}^{n_{l}} \mathcal{L}_{clu}(F(x^{(i)}_{l}), \phi_{l}(x^{(i)}_{l})) + \sum_{i=1}^{n_{u}} \mathcal{L}_{clu}(F(x^{(i)}_{u}), p^{(i)})  \right).
    \label{eq:dc2}
\end{equation}
Then, we optimize $F$ and $G_{clu}$ with the help of loss functions utilized in the original SSL learner defined in Eq. (\ref{eq:ssl}), which means we plug $\mathcal{L}_{sup}$ and $\mathcal{L}_{unsup}$ into $\mathcal{L}_{clu}$ that conducted on the pseudo-labeled data and the unlabeled data in Eq. (\ref{eq:dc2}) respectively, \ie, 
\begin{align}
    \min_{F, G_{clu}} &\bigg( \sum_{i=1}^{n_{l}}\mathcal{L}_{sup}(G_{clu}(F(x^{(i)}_{l})), \phi_{l}(x^{(i)}_{l})) \\ \nonumber
    &+\sum_{i=1}^{n_{u}}\mathcal{L}_{unsup}(G_{clu}(F(x^{(i)}_{u})),\phi_{p}(p^{(i)}) \bigg).
\end{align}

As far, we have constructed the \textbf{A}daptor for triggering \textbf{S}emi-supervised learning to out-of-box serve \textbf{D}eep image clustering (ASD) from a high-level perspective. A diagram of ASD is shown Fig. \ref{fig:fw}. For the test phase, we directly use the predictions of $G_{clu}$ to serve as the cluster assignments $\mathcal{C}$. 
Next, we will introduce how to construct pseudo-labeled data, and then how to assign cluster-level labels to them for training.

\subsection{Pseudo-Labeled Data with Instance-Level Label}
\label{sec:ppp}
\subsubsection{Pseudo-Labeled Data Sampling}
\label{sec:plds}
Initially, we introduce a core principle for sampling pseudo-labeled data: \textit{strive to encompass as broad a range of semantic classes as possible within the dataset, ensuring that their semantic span the full class spectrum to the greatest feasible degree}. This concept is grounded in the notion that for an SSL learner to effectively extract clustering knowledge from unlabeled data, it should inherently possess some degree of discriminative capability across all categories. Given the unknown nature of each sample's ground truth, it's impossible to fully ensure adherence to this principle. Nonetheless, our aim is to enhance the likelihood that pseudo-labeled samples represent all semantic categories. Notably, even if the sampled pseudo-labeled data fail to encompass every semantic class, a resilient SSL learner may still develop discriminative abilities for unrepresented classes via exposure to those included (further details in Sec. \ref{sec:ab}). First, we establish a theoretical baseline for the probability that randomly sampled pseudo-labeled labels span all semantic categories.

\begin{theorem}
\label{the}
Given $n$ samples with $k$ classes and a uniform class-distribution (\ie, the number of samples per class is $\frac{n}{k}$), the probability of randomly selecting $n_{l}$ samples ($n_{l} \geq k$) containing all $k$ classes $P_{all}(n_{l}, k, n)$ is given by:
\begin{equation}
P_{all}(n_{l}, k, n) = 1 - \sum_{i=1}^{k} (-1)^{(i-1)} \frac{\binom{k}{i} \binom{n - i(\frac{n}{k})}{n_{l}}}{\binom{n}{n_{l}}},
\end{equation}
where $\binom{a}{b}$ represents the number of combinations.
\end{theorem}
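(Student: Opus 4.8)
The plan is to recast the event as a complementary counting problem and apply inclusion--exclusion over the classes that fail to appear. First I would define, for each class $j \in \mathcal{K}$, the \emph{bad} event $A_j$ that none of the $n/k$ samples belonging to class $j$ is selected among the $n_{l}$ drawn samples. The target probability is then $P_{all}(n_{l},k,n) = 1 - P\!\left(\bigcup_{j=1}^{k} A_j\right)$, so the entire task reduces to evaluating the probability of the union of these bad events.

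Next I would compute the joint probability of any fixed collection of $i$ bad events. For a subset $S \subseteq \mathcal{K}$ with $|S| = i$, the event $\bigcap_{j \in S} A_j$ requires all $n_{l}$ drawn samples to come from the $n - i(n/k)$ samples lying outside the $i$ chosen classes. Since the draw is uniform without replacement, a direct counting argument gives $P\!\left(\bigcap_{j \in S} A_j\right) = \binom{n - i(n/k)}{n_{l}} \big/ \binom{n}{n_{l}}$. The key observation is that the uniform class-distribution hypothesis makes every class contribute exactly $n/k$ samples, so this probability depends only on the cardinality $i$ and not on which particular classes constitute $S$.

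With this symmetry in hand, I would invoke the inclusion--exclusion principle, using the factor $\binom{k}{i}$ to count the number of $i$-subsets, each contributing the identical probability from the previous step:
\begin{align}
P\!\left(\bigcup_{j=1}^{k} A_j\right)
&= \sum_{i=1}^{k} (-1)^{i-1} \sum_{|S|=i} P\!\left(\bigcap_{j \in S} A_j\right) \nonumber \\
&= \sum_{i=1}^{k} (-1)^{i-1} \binom{k}{i} \frac{\binom{n - i(n/k)}{n_{l}}}{\binom{n}{n_{l}}}.
\end{align}
Substituting this into the complementary expression $P_{all} = 1 - P\!\left(\bigcup_{j} A_j\right)$ yields the claimed formula directly.

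I expect the only delicate point to be the boundary behavior of the summand rather than the algebra itself. When $i(n/k) > n - n_{l}$, that is, when deleting $i$ full classes leaves fewer than $n_{l}$ samples, it is impossible to avoid all $i$ classes and the corresponding term must vanish. This is handled automatically by adopting the standard convention $\binom{a}{b} = 0$ for $a < b$, so no separate case analysis is needed. The remaining work is routine counting, and the uniform-distribution assumption is precisely what licenses collapsing the sum over subsets into the single binomial coefficient $\binom{k}{i}$.
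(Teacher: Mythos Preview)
Your proposal is correct and follows essentially the same route as the paper: complement to the event ``at least one class is missing,'' compute the probability that a fixed set of $i$ classes is simultaneously avoided via the hypergeometric count $\binom{n-i(n/k)}{n_l}/\binom{n}{n_l}$, and apply inclusion--exclusion with the symmetry factor $\binom{k}{i}$. Your treatment is in fact slightly more explicit than the paper's, since you name the events $A_j$, justify why the intersection probability depends only on $|S|$, and address the vanishing boundary terms via the convention $\binom{a}{b}=0$ for $a<b$.
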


\begin{figure}[t]
  \centering
  \subfloat[]{
  \includegraphics[width=4cm,height=3cm]{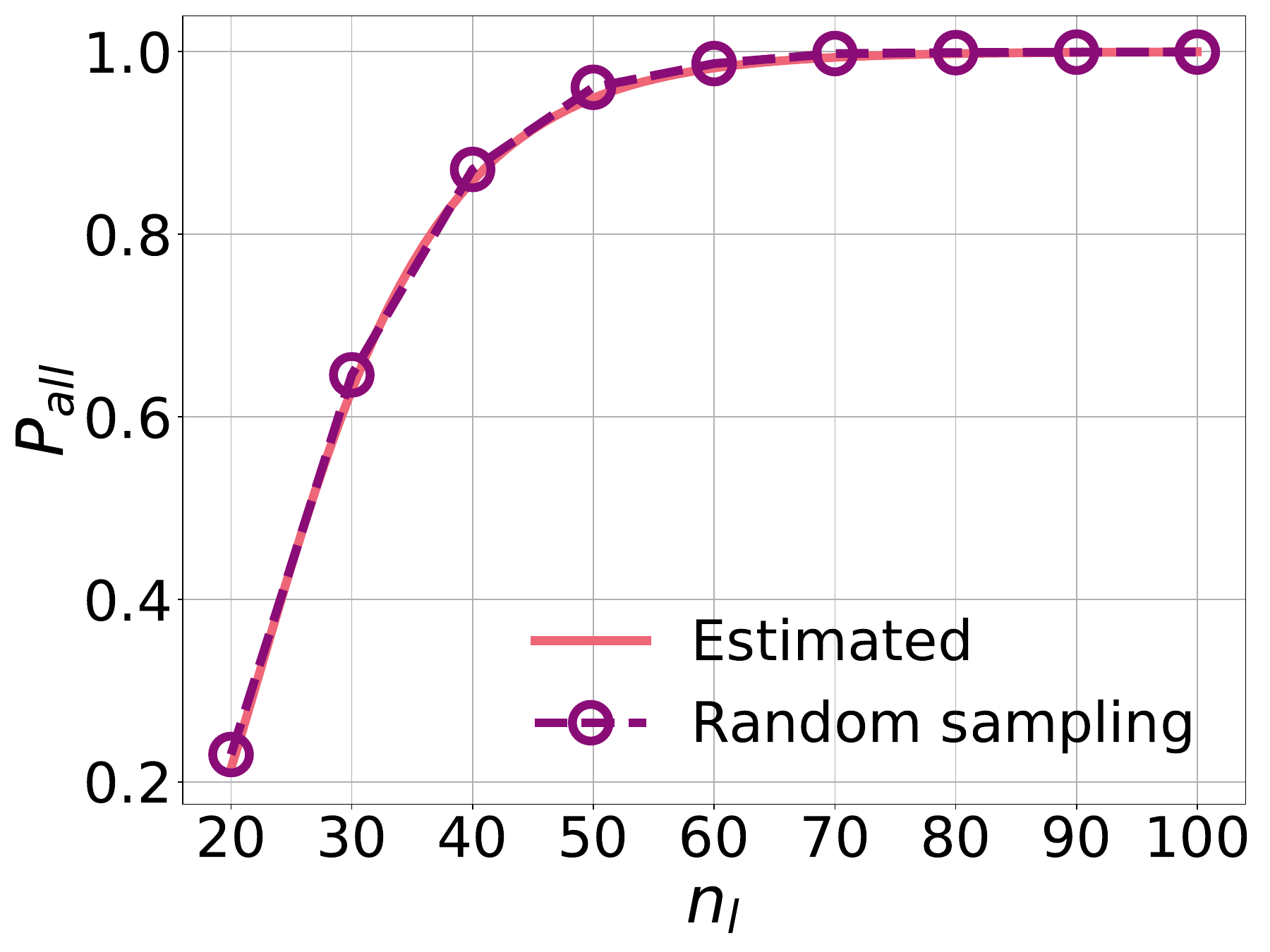}
  \label{fig:sa}
  }
\hfil
  \subfloat[]{
  \includegraphics[width=4cm,height=3cm]{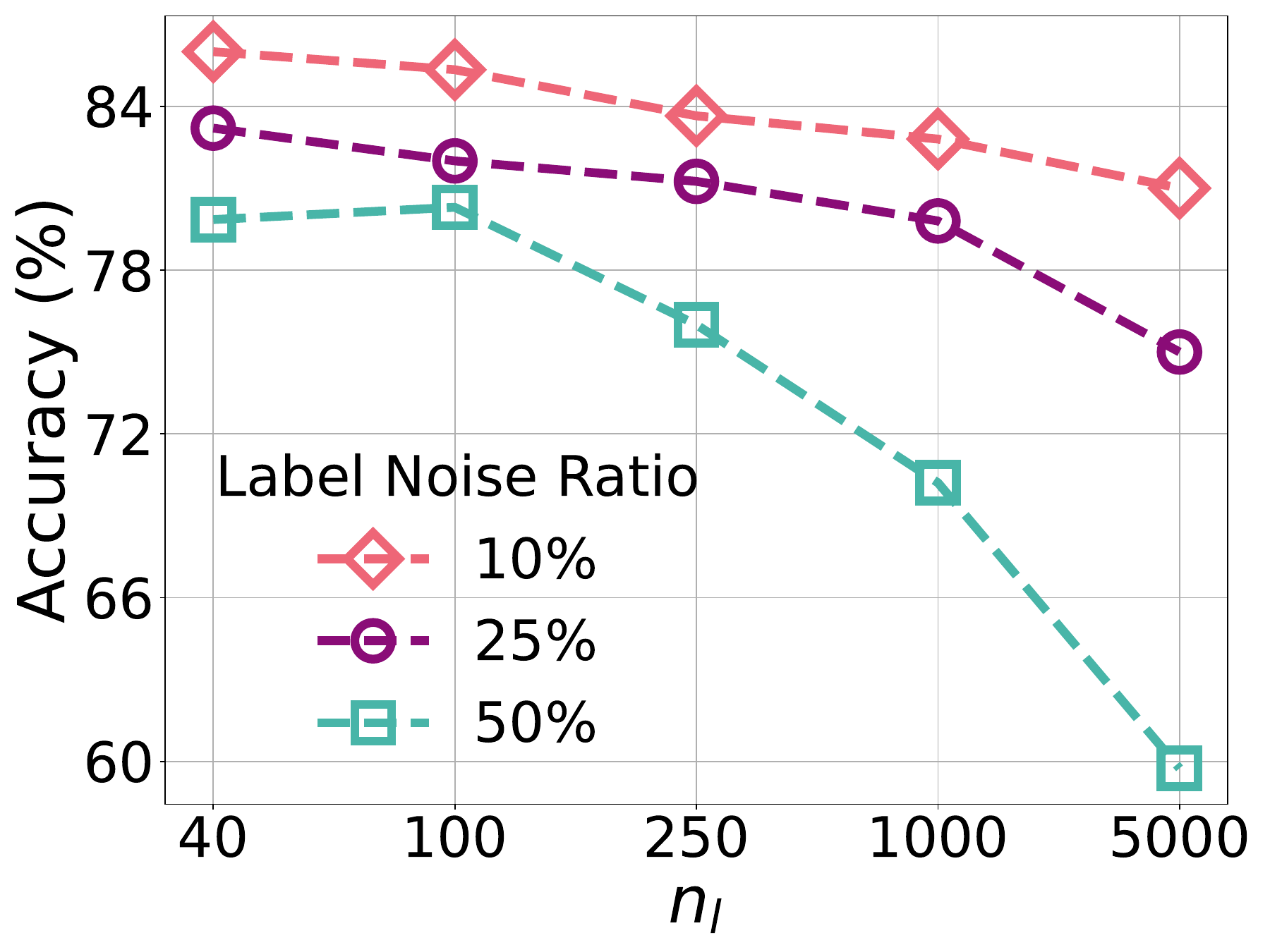}
  \label{fig:lu}
  }
  \caption{(a): $P_{all}$ with various $n_{l}$, fixed $k=10$ and $n=50000$ on CIFAR-10. The results of random  sampling are obtained through frequency statistics on multiple runs. (b): Results of FixMatch \cite{sohn2020fixmatch} (a prevailing SSL learner) on CIFAR-10. The models are trained with different amounts of labels containing fixed ratio of noisy.}
\end{figure}
See Appendix. \ref{app:proof} for detailed proof of Theorem \ref{the}.
For better understanding, we visualize the calculated $P_{all}$ in Fig. \ref{fig:sa} with confirmatory experiments on CIFAR-10. We can observe that as the value of $n_{l}$ increases, $P_{all}(n_{l}, k, n)$ will also increase. 
While increasing $n_{l}$ may be the most straightforward and intuitive method to increase $P_{all}$, blindly increasing $n_{l}$ is not a wise approach. It could lead to an increase in the absolute quantity of noisy labels,  which has negative consequences for the model because lots of noisy labels result in the overfitting of erroneous patterns. No matter how carefully we allocate cluster-level labels to them, it is difficult to completely avoid the presence of noise labels. The absolute increase in the number of noisy labels will deepen the damage to the model, greatly interfering with the process of learning useful information (an example is shown in Fig. \ref{fig:lu}). Thus, we randomly resample the pseudo-labeled data in each iteration with a non-repeating sampler to ensure that all samples are completely accessed in each training epoch, \ie, all semantic classes will be seen in the model training.

\subsubsection{Instance-Level Training}
\label{sec:ilt}
 In the first iteration, we assign a unique instance-level pseudo-label $y^{(i)}_{l}=o_i$ to each $x^{(i)}_{l}$ ($o_i$ denotes the one-hot vector with a 1
 in the $i$-th coordinate and 0's elsewhere), meaning that each $x^{(i)}_{l}$ is treated as an independent class. Then, we set up an instance-level classifier $G_{ins}(\cdot)$ and train it with $x^{(i)}_{l}$ and $y^{(i)}_{l}$. Denoting the cross-entropy loss as $H(\cdot,\cdot)$, the instance-level supervised loss $\mathcal{L}_{ins}$ in the first iteration can be calculated as
\begin{equation}
    \mathcal{L}_{ins} = \sum^{n_{l}}_{i=1}H(G_{ins}(F(x^{(i)}_{l})), o_i).
\end{equation}

\subsubsection{Optimal Transport Based Semantic Alignment}
\label{sec:sa}
Since in each iteration we resample new pseudo-labeled data (denoted as $x_{l}^{(i),t}$), we need to align their instance-level labels to the instance-level semantic classes of pseudo-labeled data from the first iteration (denoted as $x_{l}^{(i),1}$), \ie, ensuring the semantic consistency of the predictions by $G_{ins}$. Although $x_{l}^{(i)}$ in different iterations are not exactly similar at the instance level, they always exhibit certain visual similarities. Thus, we consider combining the information of $\{x_{l}^{(i),1}\}$ to more comprehensively represent $y_{l}^{(i),t}$.

$y_{l}^{(i),t}$ are generated based on the relationship between $x_{l}^{(i),t}$ and $\{x_{l}^{(i),1}\}$, framed as an Optimal Transport (OT) problem \cite{cuturi2013sinkhorn}. Denoting cost function as $\mathbf{O} \in \mathbb{R}^{n_{l}\times n_{l}}$, the cost $O_{ij}=1-F(x_{l}^{(i),t})\cdot F(x_{l}^{(j),1})$ is defined by the negative cosine similarity between the normalized features of $x_{l}^{(i),t}$ and $x_{l}^{(j),1}$. Then, letting $U({\alpha},{\beta})$ be the set of transportation plans $\mathbf{P}$ meeting the flow constraints, with sums of flows from sources to sinks matching vectors $\alpha\in \mathbb{R}^{n_{l}}$ and $\beta\in \mathbb{R}^{n_{l}}$ (both summing to one), we address the following entropic regularized OT problem \cite{cuturi2013sinkhorn}: 
\begin{equation}
    \operatorname*{min}_{\mathbf{P}\,\in U({\alpha},{\beta})}\langle\mathbf{P},{\bf O}\rangle+\lambda \sum_{ij}\mathbf{P}_{ij}\log\mathbf{P}_{ij},
    \label{eq:ot}
\end{equation} 
where $U({\alpha},{\beta})=\{\mathbf{P}\in\mathbb{R}_{+}^{n_{l}\times n_{l}}\mid\mathbf{P}\,{\bf1}_{n_{l}}={\alpha},\;\mathbf{P}^{T}{\bf1}_{n_{l}}={\beta}\}$ and ${\bf1}_{n_{l}}\in \mathbb{R}^{n_{l}}$ is the all-one vector. Considering that we randomly sample from the same dataset, we simply assume that there is no distribution shift between $\{x_{l}^{(i),1}\}$ and $\{x_{l}^{(i),t}\}$. We apply uniform probabilities for $\alpha$ and $\beta$. Following \cite{cuturi2013sinkhorn,tai2021sinkhorn}, we can use an efficient resolution: Sinkhorn-Knopp algorithm \cite{cuturi2013sinkhorn} to solve Eq. \eqref{eq:ot}. With obtained $\mathbf{P}$, we determine soft pseudo-label $y_{l}^{(i),t}\in\mathbb{R}^{n_{l}}$ for $x_{l}^{(i),t}$ by normalizing its row in $\mathbf{P}$ to sum to one, \ie, we calculated the $j$-th element of $y_{l}^{(i),t}$ by $\mathbf{P}_{ij} / \sum_j \mathbf{P}_{ij}$. Then, $\mathcal{L}_{ins}$ is calculated with aligned labels (except for the first iteration):
\begin{equation}
    \mathcal{L}_{ins} = \sum^{n_{l}}_{i=1}H(G_{ins}(F(x^{(i),t}_{l})), y_{l}^{(i),t}).
\end{equation}

\begin{table*}[t!]
  \caption{Clustering performance comparisons on five benchmark datasets. We show the results of ASD run independently and loaded into deep clustering frameworks. \textbf{Bold} and \underline{underline} indicate the best and second best results, respectively. \textcolor{light-gray-2}{Gray results}  use a deeper ResNet-34 backbone (making comparison unfair), whereas results of $^\ast$ are reproduced with ResNet-18 in \cite{zhang2024deep}. For fairness, We cite the results of the original papers of baselines. For SCAN and RUC, since their original papers lack ImageNet-10/-Dogs evaluations, we test them on these datasets based on our re-implementation.}
  \label{tab:cadr}
  \vskip 0in
    \centering
   \footnotesize
    \setlength{\tabcolsep}{1.8mm}{

\begin{tabular}{@{}lccccccccccccccc@{}}
\toprule
Datasets                  &   \multicolumn{3}{c}{CIFAR-10}      &  \multicolumn{3}{c}{ CIFAR-100}   &    \multicolumn{3}{c}{ STL-10}  &  \multicolumn{3}{c}{  ImageNet-10 } &  \multicolumn{3}{c}{  ImageNet-Dogs }    \\ \cmidrule(r){1-1} \cmidrule(lr){2-4} \cmidrule(lr){5-7} \cmidrule(lr){8-10} \cmidrule(lr){11-13}  \cmidrule(lr){14-16}
Metrics                   & ACC  & NMI      & ARI  & ACC  & NMI       & ARI  & ACC  & NMI    & ARI & ACC  & NMI    & ARI  & ACC  & NMI    & ARI \\ \midrule
K-Means \cite{macqueen1965some}          & 22.9 & 8.7      & 4.9  & 13.0 & 8.4       & 2.8  & 19.2 & 12.5   & 6.1& 24.1  &11.9  &5.7 & 10.5 & 5.5 & 2.0  \\
SC \cite{zelnik2004self}              & 24.7 & 10.3     & 8.5  & 13.6 & 9.0       & 2.2  & 15.9 & 9.8    & 4.8 &  27.4 & 15.1 & 7.6 &  11.1 & 3.8 & 1.3 \\
NMF \cite{cai2009locality}               & 19.0 & 8.1      & 3.4  & 11.8 & 7.9       & 2.6  & 18.0 & 9.6    & 4.6 &  23.0 & 13.2 & 6.5 & 11.8 & 4.4 & 1.6  \\
AE \cite{cai2009locality}               & 31.4 & 23.9     & 16.9 & 16.5 & 10.0      & 4.8  & 30.3 & 25.0   & 16.1 &  31.7  &21.0& 15.2 &  18.5 & 10.4 & 7.3 \\
VAE \cite{kingma2013auto}              & 29.1 & 24.5     & 16.7 & 15.2 & 10.8      & 4.0  & 28.2 & 20.0   & 14.6 &   33.4 & 19.3 & 16.8 & 17.9 & 10.7 & 7.9 \\
DCGAN \cite{radford2015unsupervised}           & 31.5 & 26.5     & 17.6 & 15.1 & 12.0      & 4.5  & 29.8 & 21.0   & 13.9 &  34.6 & 22.5& 15.7 & 17.4 & 12.1 & 7.8 \\
SWWAE \cite{zhao2015stacked}            & 28.4 & 23.3     & 16.4 & 14.7 & 10.3      & 3.9  & 27.0   & 19.6   & 13.6 & -  & -    & - & -  & -    & - \\
JULE \cite{yang2017towards}             & 27.2 & 19.2     & 13.8 & 13.7 & 10.3      & 3.3  & 27.7 & 18.2   & 16.4 & 30.0 & 17.5 & 13.8 &  13.8 & 5.4 & 2.8 \\
DEC \cite{xie2016unsupervised}             & 30.1 & 25.7     & 16.1 & 18.5 & 13.6      & 5.0    & 35.9 & 27.6   & 18.6 & 38.1 & 28.2  & 20.3 & 19.5 & 12.2 & 7.9 \\
DAC \cite{chang2017deep}               & 52.2 & 39.6     & 30.6 & 23.8 & 18.5      & 8.8  & 47.0   & 36.6   & 25.7 &    52.7 & 39.4 & 30.2 &  27.5 & 21.9 & 11.1\\
DeepCluster \cite{caron2018deep}       & 37.4 & –        & –    & 18.9 & –         & –    & 33.4 & –      & –    & -  & -    & -  & -  & -    & - \\
ADC \cite{haeusser2019associative}              & 32.5 & –        & –    & 16.0   & –         & –    & 53.0   & –      & –    & -  & -    & -  & -  & -    & - \\
DDC \cite{chang2019deep}               & 52.4 & 42.4     & 32.9 & –    & –         & –    & 48.9 & 37.1   & 26.7& 57.7 &  43.3 & 34.5  & -  & -    & -\\
DCCM \cite{wu2019deep}             & 62.3 & 49.6     & 40.8 & 32.7 & 28.5      & 17.3 & 48.2 & 37.6   & 26.2 &  71.0 & 60.8 & 55.5 & 38.3 & 32.1 & 18.2 \\
IIC \cite{ji2019invariant}       & 61.7 & –        & –    & 25.7 & –         & –    & 61.0   & –      & –   & -  & -    & -  & -  & -    & -  \\
PICA   \cite{huang2020deep}      & 69.6 & 59.1     & 51.2 & 33.7 & 31.0        & 17.1 & 71.3 & 61.1   & 53.1&  87.0 & 80.2 & 76.1 & 35.2 & 35.2 & 20.1 \\ \midrule
\rowcolor{GrayMedium} ASD  (Ours) &&&&&&&&&&&&&&&         \\
\rowcolor{GrayMedium} $+$MutexMatch \cite{duan2022mutexmatch}            & \pmhalf{92.6}{3.0} & 
\pmhalf{75.0}{9.2} & \pmhalf{61.9}{11.4} & \pmhalf{40.2}{3.5} & \pmhalf{38.5}{5.6} & \pmhalf{22.4}{4.4} & \pmhalf{74.2}{4.0} & \pmhalf{62.6}{3.8} & \pmhalf{55.3}{3.5} & \pmhalf{88.3}{1.2} & \pmhalf{83.2}{1.1} & \pmhalf{80.1}{0.9} & \pmhalf{65.1}{2.3} & \pmhalf{61.6}{2.6} & \pmhalf{52.5}{1.4} \\
\rowcolor{GrayMedium} $+$FreeMatch \cite{wang2023freematch}            & \pmhalf{93.1}{1.8} & \pmhalf{79.5}{8.6} & \pmhalf{70.8}{8.4} & \pmhalf{43.2}{4.0} & \pmhalf{43.9}{3.7} & \pmhalf{23.2}{3.2} & \pmhalf{77.1}{1.8} & \pmhalf{70.0}{2.6} & \pmhalf{69.2}{3.3} & \pmhalf{91.1}{1.1} & \pmhalf{84.6}{0.6} & \pmhalf{81.9}{0.5} & \pmhalf{65.9}{2.5} & \pmhalf{62.0}{1.9} & \pmhalf{52.7}{2.7} \\
\rowcolor{GrayMedium} ASD$_{\text{PS}}$  (Ours) &&&&&&&&&&&&&&&         \\
\rowcolor{GrayMedium} $+$MutexMatch\cite{duan2022mutexmatch}       
& \pmhalf{93.1}{1.8} & \pmhalf{85.9}{5.8} & \pmhalf{85.2}{6.5} 
& \pmhalf{43.6}{3.5} & \pmhalf{40.7}{5.0} & {\pmhalf{23.7}{4.0}} 
& \pmhalf{75.9}{4.0} & \pmhalf{62.9}{3.5} & \pmhalf{56.4}{3.9} 
& \pmhalf{89.5}{1.4} & \pmhalf{85.2}{1.0} & \pmhalf{83.7}{0.9} 
& {\pmhalf{66.5}{2.3}} & \pmhalf{62.0}{2.6} & \underline{\pmhalf{52.9}{1.5}} \\
\rowcolor{GrayMedium} $+$FreeMatch\cite{wang2023freematch}            
& \underline{\pmhalf{93.5}{0.6}} & {\pmhalf{86.2}{2.6}} & {\pmhalf{85.9}{2.7}} 
& {\pmhalf{45.2}{2.9}} & {\pmhalf{44.3}{3.2}} & {\pmhalf{23.7}{3.8}} 
& {\pmhalf{78.8}{2.7}} & {\pmhalf{71.5}{4.6}} & {\pmhalf{60.3}{5.5}} 
& {\pmhalf{91.5}{1.2}} & {\pmhalf{86.1}{1.5}} & \pmhalf{82.2}{1.3} 
& \pmhalf{66.3}{2.0} & \underline{\pmhalf{62.7}{2.4}} & \textbf{\pmhalf{53.4}{2.0}} \\\midrule
DCDC  \cite{dang2021doubly}             & 69.9 & 58.5     & 50.6 & 34.9 & 31.0        & 17.9 & 73.4 & {62.1}   & 54.7& -  & -    & - & -  & -    & -  \\
NNM \cite{dang2021nearest}                     & 84.3 & 74.8     & 70.9 & 47.7 & 48.4      & 31.6 & {80.8} & {69.4}   & {65.0} & -  & -    & - & -  & -    & -  \\ 
\textcolor{light-gray-2}{CC \cite{li2021contrastive}}  & \textcolor{light-gray-2}{79.0} &\textcolor{light-gray-2}{70.5} & \textcolor{light-gray-2}{63.7} & \textcolor{light-gray-2}{42.9} &\textcolor{light-gray-2}{43.1} & \textcolor{light-gray-2}{26.6} & \textcolor{light-gray-2}{85.0} &\textcolor{light-gray-2}{76.4}& \textcolor{light-gray-2}{72.6} & \textcolor{light-gray-2}{89.3} &\textcolor{light-gray-2}{85.9}& \textcolor{light-gray-2}{82.2} & \textcolor{light-gray-2}{42.9} &\textcolor{light-gray-2}{44.5}& \textcolor{light-gray-2}{27.4} \\
CC$^\ast$ \cite{zhang2024deep}  &  76.6 & 68.1& 60.6 & 42.6 & 42.4& 26.7 &  74.7 &67.4& 60.6 & 89.5 &86.2 & 82.5&  34.2 &40.1& 22.5 \\
\textcolor{light-gray-2}{ProPos \cite{huang2022learning}}  & \textcolor{light-gray-2}{94.3}  &\textcolor{light-gray-2}{ 88.6} & \textcolor{light-gray-2}{88.4} &\textcolor{light-gray-2}{61.4} &\textcolor{light-gray-2}{60.6}  &\textcolor{light-gray-2}{45.1}& \textcolor{light-gray-2}{ 86.7} &\textcolor{light-gray-2}{75.8} &\textcolor{light-gray-2}{ 73.7}&\textcolor{light-gray-2}{ 95.6} &\textcolor{light-gray-2}{ 89.6}& \textcolor{light-gray-2}{90.6} & \textcolor{light-gray-2}{74.5} & \textcolor{light-gray-2}{69.2}& \textcolor{light-gray-2}{62.7}\\
ProPos$^\ast$ \cite{zhang2024deep}                   &  92.3 & 86.0&84.6& 52.8& 53.8 &36.0 &73.1& 68.7& 61.4&90.0 & 84.8 &81.9& 47.4& 45.9& 33.8 \\ 
HaDis \cite{zhang2024deep} &93.0&\underline{86.9} & \underline{86.2}& \underline{56.3}  &\underline{56.8}& \underline{41.1} & 73.9 &69.6& 62.3& 94.9&88.4& \underline{89.2}&  55.0 &49.6& 37.6
\\
\textcolor{light-gray-2}{DCHL} \cite{huang2024deep} & \textcolor{light-gray-2}{80.1} & \textcolor{light-gray-2}{71.0} & \textcolor{light-gray-2}{65.4} & \textcolor{light-gray-2}{44.6} & \textcolor{light-gray-2}{43.2} & \textcolor{light-gray-2}{27.5} & \textcolor{light-gray-2}{82.1} & \textcolor{light-gray-2}{72.6} & \textcolor{light-gray-2}{68.0} & \textcolor{light-gray-2}{-} & \textcolor{light-gray-2}{-} & \textcolor{light-gray-2}{-} & \textcolor{light-gray-2}{51.1} & \textcolor{light-gray-2}{49.5} & \textcolor{light-gray-2}{35.9} \\
\textcolor{light-gray-2}{IcicleGCN} \cite{xu2024deep} & \textcolor{light-gray-2}{80.7} & \textcolor{light-gray-2}{72.9} & \textcolor{light-gray-2}{66.0} & \textcolor{light-gray-2}{46.1} & \textcolor{light-gray-2}{45.9} & \textcolor{light-gray-2}{31.1} & \textcolor{light-gray-2}{-} & \textcolor{light-gray-2}{-} & \textcolor{light-gray-2}{-} & \textcolor{light-gray-2}{95.5} & \textcolor{light-gray-2}{90.4} & \textcolor{light-gray-2}{90.5} & \textcolor{light-gray-2}{41.5} & \textcolor{light-gray-2}{45.8} & \textcolor{light-gray-2}{27.9} \\
\textcolor{light-gray-2}{MRMCC} \cite{jin2025multi} & \textcolor{light-gray-2}{85.6} & \textcolor{light-gray-2}{76.9} & \textcolor{light-gray-2}{73.1} & \textcolor{light-gray-2}{44.3} & \textcolor{light-gray-2}{47.0} & \textcolor{light-gray-2}{30.4} & \textcolor{light-gray-2}{78.3} & \textcolor{light-gray-2}{69.1} & \textcolor{light-gray-2}{62.6} & \textcolor{light-gray-2}{91.0} & \textcolor{light-gray-2}{85.4} & \textcolor{light-gray-2}{82.7} & \textcolor{light-gray-2}{50.6} & \textcolor{light-gray-2}{50.3} & \textcolor{light-gray-2}{36.2} \\
\midrule
SCAN \cite{van2020scan}     & 81.6 & 71.5     & 66.5 & 44.0   & 44.9      & 28.3 & 79.2 & 67.3   & 61.8 & 89.5 &86.7     & 83.5 &44.6 &45.7 &30.6\\
\rowcolor{GrayMedium} $+$ASD  (Ours)                   
& \pmhalf{84.1}{4.0} & \pmhalf{75.6}{7.2} & \pmhalf{72.7}{8.5} 
& \pmhalf{48.9}{2.6} & \pmhalf{46.2}{5.3} & \pmhalf{30.5}{4.3} 
& \pmhalf{82.4}{2.7} & \pmhalf{69.9}{3.8} & \pmhalf{63.4}{3.5} 
& \pmhalf{92.1}{1.2} & \pmhalf{88.9}{1.2} & \pmhalf{84.7}{1.4} 
& \pmhalf{47.1}{3.9} & \pmhalf{47.2}{4.6} & \pmhalf{31.9}{5.9} \\
RUC \cite{park2021improving}     & 90.1 & -     & - &  54.5   & -      & - & 86.7 & -  &- &91.8 &88.4 &85.8 &56.0 &50.7 &36.5\\
\rowcolor{GrayMedium} $+$ASD (Ours)                   
& \pmhalf{92.0}{2.1} & - & - 
& \pmhalf{55.4}{2.7} & - & - 
& \pmhalf{89.2}{2.8} & - & - 
& \pmhalf{92.4}{1.3} & \pmhalf{89.7}{2.4} & \pmhalf{85.6}{3.6} 
& \pmhalf{57.1}{2.5} & \pmhalf{51.5}{3.7} & \pmhalf{37.0}{4.4} \\ 
SPICE \cite{niu2022spice}    &    92.6 & 86.5& 85.2& 53.8& 56.7& 38.7 & \underline{93.8}& \underline{87.2}& \underline{87.0} & \textbf{95.9} & \textbf{90.2}& \textbf{91.2}& \underline{67.5}& \underline{62.7} &52.6 \\
\rowcolor{GrayMedium} $+$ASD (Ours)                   
& \textbf{\pmhalf{94.2}{0.7}} & \textbf{\pmhalf{87.8}{1.0}} & \textbf{\pmhalf{87.6}{1.4}} 
& \textbf{\pmhalf{56.9}{0.4}} & \textbf{\pmhalf{58.1}{1.5}} & \textbf{\pmhalf{42.2}{2.9}} 
& \textbf{\pmhalf{94.2}{0.3}} & \textbf{\pmhalf{87.5}{0.8}} & \textbf{\pmhalf{87.8}{0.5}} 
& \underline{\pmhalf{95.2}{0.2}} & \underline{\pmhalf{89.8}{0.6}} & \pmhalf{88.4}{0.6} 
& \textbf{\pmhalf{69.1}{1.8}} & \textbf{\pmhalf{63.0}{2.6}} & \textbf{\pmhalf{53.4}{3.0}}  \\ \bottomrule
\end{tabular}
      }
\end{table*}
\begin{table}[t!]
  \footnotesize
  \caption{Comparisons with SPICE \cite{niu2022spice} using ResNet-18 on split datasets, where the training and testing images are mutually exclusive, following the method in \cite{niu2022spice}. SPICEs denotes SPICE without embedded SSL framework.
  }
  \label{tab:appdc}
  \vskip 0in
    \centering
    \setlength{\tabcolsep}{0.7mm}{
\begin{tabular}{@{}lccccccccc@{}}
\toprule
Datasets                  &   \multicolumn{3}{c}{CIFAR-10}      &  \multicolumn{3}{c}{ CIFAR-100}   &    \multicolumn{3}{c}{ STL-10}  \\ \cmidrule(r){1-1} \cmidrule(lr){2-4} \cmidrule(lr){5-7} \cmidrule(lr){8-10} 
Metrics                   & ACC  & NMI      & ARI  & ACC  & NMI       & ARI  & ACC  & NMI    & ARI  \\ \midrule
SPICEs   & 84.5 &73.9 &70.9 &46.8 &45.7 &32.1 & 86.2 & 75.6 & 73.2 \\
 SPICE          &\underline{91.8} &\underline{85.0} &\underline{83.6} &\underline{53.5} &\underline{56.5} &\textbf{40.4} & \underline{92.0} &\underline{85.2} &\underline{83.6}  \\
\rowcolor{GrayMedium} $+$ASD             & \textbf{\pmhalf{92.6}{1.2}} &\textbf{\pmhalf{85.6}{1.8}  }   & \textbf{\pmhalf{84.8}{2.3}} & \textbf{\pmhalf{55.3}{0.9}} & \textbf{\pmhalf{57.7}{2.1}}      & \underline{\pmhalf{40.1}{3.5} }&\textbf{\pmhalf{92.8}{0.6}} &\textbf{\pmhalf{85.4}{2.2} }    &\textbf{\pmhalf{85.2}{2.0} } \\ 
\bottomrule
\end{tabular}}

\end{table}

\subsection{Prototypes Accompanied by Neighbors Based Sampling}
\label{sec:proto}
Considering the fundamental principle for sampling pseudo-labeled data: \textit{including as many semantic classes as possible in the dataset}, the effect of randomly selecting pseudo-labeled data may be unsatisfactory. Thus, we propose the following \textbf{P}rototypes accompanied by neighbors based pseudo-labeled data \textbf{S}ampling (\textbf{PS}). We first apply K-Means algorithm \cite{macqueen1965some}  to cluster all feature vectors extracted by $F$ and thus we can obtain $k$ centroids $\{\mu^{(1)},\cdots,\mu^{(k)}\}$ to served as prototypes. Each $\mu^{(i)}$ represents a group of similar samples, which can be considered as representatives of a specific semantic class. Hereafter, we mine 
$\frac{n_{l}}{k}$ nearest neighbors in $D$ for each $\mu^{(i)}$ over the feature space. We denote the neighboring sample set of $\mu^{(i)}$ as $\mathcal{N}_{\mu^{(i)}}$. Finally, we obtain $\mathcal{D}_{l}=\mathcal{N}_{\mu^{(1)}}\cup\mathcal{N}_{\mu^{(2)}}\cup \cdots\cup \mathcal{N}_{\mu^{(k)}}$.   Since the prototypes $\mu^{(i)}$ are chosen to be diverse and representative, the samples in different $\mathcal{N}_{\mu^{(i)}}$ will likely belong to different semantic classes. This encourages that $\mathcal{D}_{l}$ encompass a broader range of class space, \ie, even if $n_{l}$ is very small, prototype-based sampling can almost always ensure that $D_{l}$ covers all semantic classes ($P_{all}\approx1$).  Although PS incurs additional computational overhead, PS could further boosts ASD, because PS improves the representativeness of pseudo-labeled data. We refer ASD equipped with PS to ASD$_{\text{PS}}$. Moreover, PS benefits more from advanced pretraining technique (discussed in Sec. \ref{sec:dcp}), though this slightly conflicts with our out-of-the-box design philosophy. Nevertheless, our method remains fully compatible with pretraining strategies such as those used in \cite{van2020scan} and \cite{niu2022spice}.

\subsection{Class Transition Tracking Based Label Mapping}
\label{sec:ctt}
Although we've initiated cold-start learning of discriminative features at the instance-level, the current challenge lies in mapping instance-level labels of pseudo-labeled data to cluster-level labels to provide supervision for clustering tasks. A straightforward idea might be to directly cluster pseudo-labeled data using sample-level algorithms like $k$-Means \cite{macqueen1965some}, but this approach is sensible. Since pseudo-labeled data are resampled in each iteration, clustering them at the \textbf{sample-level} would lead to inconsistency in cluster semantics, and cluster matching algorithms (\eg, hungarian matching algorithm \cite{kuhn1955hungarian}) cannot simply rectify this due to the variability in clustering points each time. However, note that since we've aligned the semantics of pseudo-labeled data at the instance-class-level (Sec. \ref{sec:sa}), we can perform clustering on them directly at the \textbf{class-level}, thus using the cluster assignment to decide the label mapping from instance-level to cluster-level. 

In the first iteration, we solely conduct instance-level training, thus eliminating the need for label mapping. In the subsequent iterations, we first compute the instance-level class predictions $\hat{q}^{(i)}=\arg\max(G_{ins}(F(x^{(i)}_{u})))$ for the unlabeled data.  With obtained $\hat{q}^{(i)}$, inspired by \cite{duan2023towards}, we track \textit{class transitions} between consecutive epochs. Denoting $\hat{q}^{(i),e}=a$ as the instance-level class prediction at epoch $e$, during the learning of model, the class transition procedure is defined as the model self-rectify the class prediction to $\hat{q}^{(i),e+1}=b$, where $a\neq b$. 
The model's inconsistent performance on the same sample reflects its difficulty in distinguishing between classes $a$ and $b$, indicating a high degree of similarity between them. Class transitions occurred in $m$-th batch are tracked into $\mathbf{C}^{(m)}\in\mathbb{R}^{n_l\times n_l}_{+}$, where each element $C_{ij}^{(m)}$ is the frequency of class transition and parameterized as follows:
\begin{align}
  C^{(m)}_{ij}=  
  \left| \left\{(i,j)\mid\hat{q}^{(b),e}=i,\hat{q}^{(b),e+1}=j,i\neq j \right\} \right|, 
    \label{eq:track}
\end{align}
where $b\in  \left\{1,...,B\right\}$, $m\in  \left\{1,...,N_{b}\right\}$ and $N_{b}$ is the number of tracked batches with unlabeled data batch size $B$. Finally, the class transition matrix $\mathbf{C}'$ is obtained by averaging on all $\mathbf{C}^{(m)}$, \ie, $C_{ij}'=\sum_{m=1}^{N_{b}}{C^{(m)}_{ij}}/N_{b}$.

\begin{table*}[t]
  \caption{Accuracy (\%) comparisons with SSL methods using $40$ labels, which is same as $n_l$ used in ASD. Results of baselines are referred from \cite{duan2022mutexmatch,wang2023freematch} and we use the same settings as them, \ie, WRN-28-2 and WRN-37-2 \cite{zagoruyko2016wide} are adopted sa backbones for CIFAR-10 and STL-10, respectively.  }
  \label{tab:con}
  \vskip 0in
    \centering
  \footnotesize
     \begin{tabular}{@{}ccccccccc@{}}
\toprule
\multirow{2}{*}{Method}         &             \multicolumn{5}{c}{Semi-supervised Baseline}       & \multicolumn{2}{c}{ASD}            \\ \cmidrule(lr){2-6} \cmidrule(lr){7-8} 
     & MixMatch        & ReMixMatch & FixMatch   & MutexMatch &FreeMatch   & $+$MutexMatch  & $+$FreeMatch              \\ 
     &\cite{berthelot2019mixmatch} & \cite{berthelot2020remixmatch} & \cite{sohn2020fixmatch}  & \cite{duan2022mutexmatch} & \cite{wang2023freematch} & (Ours) & (Ours) \\ \midrule
CIFAR-10    &  63.81$\pm$6.48  & 90.12$\pm$1.03 & 92.53$\pm$0.28  & 94.21$\pm$0.84 & 95.10$\pm$0.04  & 92.88$\pm$0.69 & 93.46$\pm$0.31  \\ 
STL-10    &  45.07$\pm$0.96  & 67.88$\pm$6.24 & 64.03$\pm$4.14  & - & 84.44$\pm$0.55 & 78.32$\pm$1.57 & 79.09$\pm$0.42  \\

\bottomrule
\end{tabular}
\end{table*}
\begin{table*}[t]
  \caption{Accuracy (\%) comparisons with SSL methods using various amounts of ground-truth labels on CIFAR-10. Results of baselines are referred from \cite{duan2022mutexmatch,wang2023freematch}.  }
  \label{tab:appssl}
  \vskip 0in
    \centering
  \footnotesize
  \setlength{\tabcolsep}{2.4mm}{
     ~ \begin{tabular}{@{}ccccccccc@{}}
\toprule
\multirow{3}{*}{Labels}         &             \multicolumn{5}{c}{Semi-supervised Baseline}       & \multicolumn{2}{c}{ASD}            \\ \cmidrule(lr){2-6} \cmidrule(lr){7-8} 
     & MixMatch        & ReMixMatch & FixMatch   & MutexMatch &FreeMatch   & $+$MutexMatch  & $+$FreeMatch              \\ 
     &\cite{berthelot2019mixmatch} & \cite{berthelot2020remixmatch} & \cite{sohn2020fixmatch}  & \cite{duan2022mutexmatch} & \cite{wang2023freematch} & (Ours) & (Ours) \\ \midrule
10     &  34.24$\pm$7.06  & 79.23$\pm$7.48 & 75.21$\pm$7.65  & 66.45$\pm$30.42 & 91.93$\pm$4.24 & \multirow{3}{*}{\rotatebox{90}{92.88~}} & \multirow{3}{*}{\rotatebox{90}{93.46~}}  \\ 
250    &  86.37$\pm$0.59 & 93.70$\pm$0.05 & 95.14$\pm$0.05  & - & 95.12$\pm$0.18 &  &   \\
4000    &  93.34$\pm$0.26  & 95.16$\pm$0.01 & 95.79$\pm$0.08  & 95.63$\pm$0.06 & 95.90$\pm$0.02 & &   \\
\bottomrule
\end{tabular}
  }
\end{table*}
Intuitively, the more similar two classes are, the more likely they are to be misclassified as each other's classes. Thus,  $\mathbf{C}'$ can be regarded as a similarly matrix over instance-level class space, which contains the information learned from total unlabeled data. Then, we define the cluster-level label assignment function $\phi_{l}(x^{(i)}_{l})=\texttt{CluAlg}_{k}(\texttt{Norm}(\mathbf{C}'))_{\hat{y}_{l}^{(i)}}$, where $\hat{y}_{l}^{(i)}=\arg\max(y_{l}^{(i)})$ (\ie, semantically aligning to initial instance-level class), $\texttt{Norm}(\cdot)$ is the normalization operation and $\texttt{CluAlg}_{k}(\cdot)$ is a clustering algorithm that can directly accept a similarity matrix as input to output a cluster assignments set $\{c^{(1)},\cdots,c^{(n_{l})}\}$ with $k$ classes. $\phi_{l}$ is constantly updated with refreshed $\mathbf{C}'$, benefiting from the update of knowledge learned on unlabeled data. However, updating $\phi_{l}$ in each iteration is a waste of computing resources, so we update it every $N_{t}$ iterations. As $\texttt{CluAlg}_{k}$ may assign different cluster indexes to the same group across runs, we employ the Hungarian matching algorithm to align cluster indexes between adjacent runs. Finally, we obtain the total loss:
\begin{align}
    \mathcal{L}&=\mathcal{L}_{ins}+\sum_{i=1}^{n_{u}}\mathcal{L}_{unsup}(G_{clu}(F(x^{(i)}_{u})),\phi_{p}(p^{(i)})) \nonumber  \\
    & +\sum_{i=1}^{n_{l}}\mathcal{L}_{sup}(G_{clu}(F(x^{(i)}_{l})),\texttt{CluAlg}_{k}(\texttt{Norm}(\mathbf{C}'))_{\hat{y}_{l}^{(i)}})  ,
\end{align}
where $\mathcal{L}_{sup}$, $\mathcal{L}_{unsup}$ and $\phi_{p}$ follow the implementation of the adopted SSL learner. See Sec. \ref{sec:imde} for details.

\noindent\textbf{Remark.}
Although ASD primarily aims to cold-start SSL learners for DC, it can also be universally integrated into an existing DC framework to bridge it to an SSL framework. It employs class transition tracking to record the historical information of the model's learning on all unlabeled data, thereby more scientifically providing cluster-level labels for pseudo-labeled data to better unleash the performance potential of SSL in clustering (see Sec. \ref{sec:res} for verification).

\section{Experiments}
\label{sec:exp}
\subsection{Experimental Setup}

\subsubsection{Dataset} ASD is evaluated on five image commonly used in image clustering: CIFAR-10, CIFAR-100 \cite{krizhevsky2009learning}, STL-10 \cite{coates2011an}, ImageNet-10 and ImageNet-Dogs \cite{chang2017deep}. CIFAR-10 and CIFAR-100 comprise 50,000/10,000 training/testing samples, respectively belonging to 10 and 20 classes. STL-10 is extracted from ImageNet \cite{deng2009imagenet}, containing 500 training samples, 800 testing samples from 10 classes and 100,000 out-of-distribution samples. ImageNet-10/-Dogs are also  subsets of ImageNet, consisting of 10/15 classes with 13,000/19,500 samples. Following \cite{huang2022learning},  we employ different image sizes depending on the dataset: 32$\times$32 for CIFAR-10 and CIFAR-100, 96$\times$96 for STL-10, ImageNet-10, and ImageNet-Dogs. 

\subsubsection{Baselines}
Following \cite{van2020scan,niu2022spice,zhang2024deep}, we provide various baseline methods for comparisons, listed in Tab. \ref{tab:cadr} from top to bottom: (1) conventional clustering algorithms: K-Means \cite{macqueen1965some}, SC \cite{zelnik2004self}, NMF \cite{cai2009locality}, (2) DC approaches without using contrastive learning: AE \cite{bengio2006greedy}, VAE \cite{kingma2013auto}, DCGAN \cite{radford2015unsupervised}, SWWAE \cite{zhao2015stacked} JULE \cite{yang2017towards}, DEC \cite{xie2016unsupervised}, DAC \cite{chang2017deep}, DeepCluster \cite{caron2018deep}, ADC \cite{haeusser2019associative}, DDC \cite{chang2019deep}, DCCM \cite{wu2019deep}, IIC \cite{ji2019invariant}) and PICA \cite{huang2020deep}, (3) contrastive learning based DC methods:  DCDC \cite{dang2021doubly}, NNM \cite{dang2021nearest}, CC \cite{li2021contrastive}, ProPos \cite{huang2022learning}, DCHL \cite{huang2024deep}, IcicleGCN \cite{xu2024deep} and MRMCC \cite{jin2025multi}. (4) contrastive learning based DC methods with SSL boosting: SCAN \cite{van2020scan}, RUC \cite{park2021improving} and SPICE \cite{niu2022spice}.

\subsubsection{Evaluation Metrics}
For evaluation metrics, we adopt clustering accuracy (ACC)
\cite{hubert1985comparing}, normalized
mutual information (NMI) \cite{strehl2002cluster} and adjusted rand index (ARI) \cite{li2006relationships} as previous works. Following \cite{van2020scan,dang2021nearest,park2021improving,niu2022spice},  we train our models on the training images and test them on the testing images for CIFAR-10/100-20 and STL-10. For baseline methods, we directly cite the results reported in original papers or related works whereas we report our results averaged on multiple runs.  

\begin{figure}[t]
  \centering
  \subfloat[CIFAR-10]{
  \includegraphics[width=4cm,height=2cm]{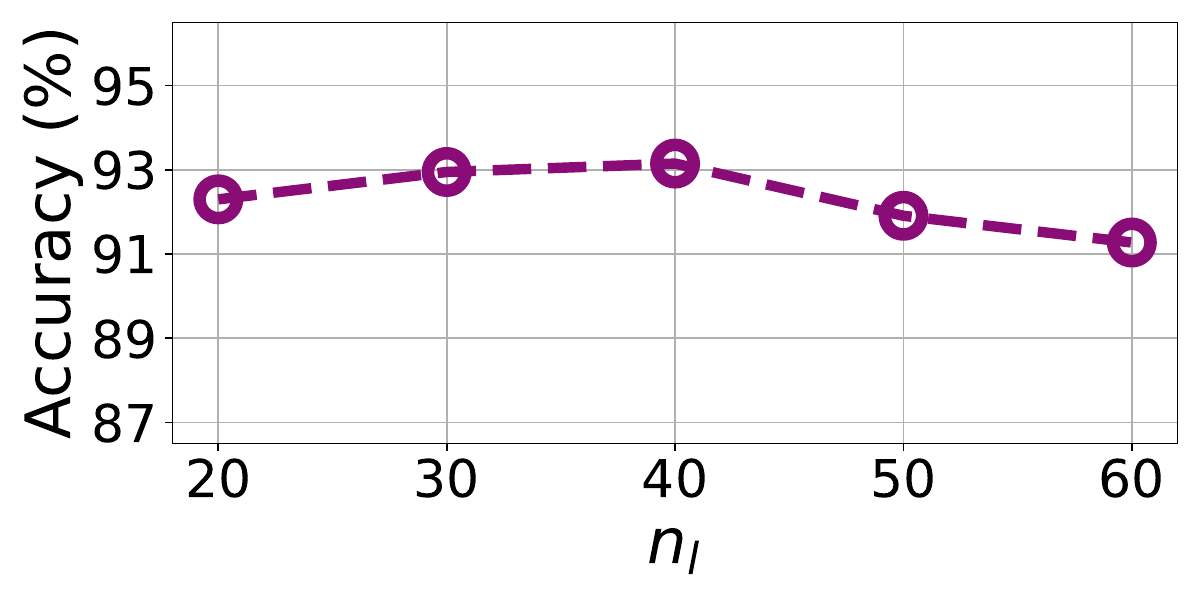}
  \label{fig:exp-a2}
  }
\hfil
  \subfloat[CIFAR-100]{
  \includegraphics[width=4cm,height=2cm]{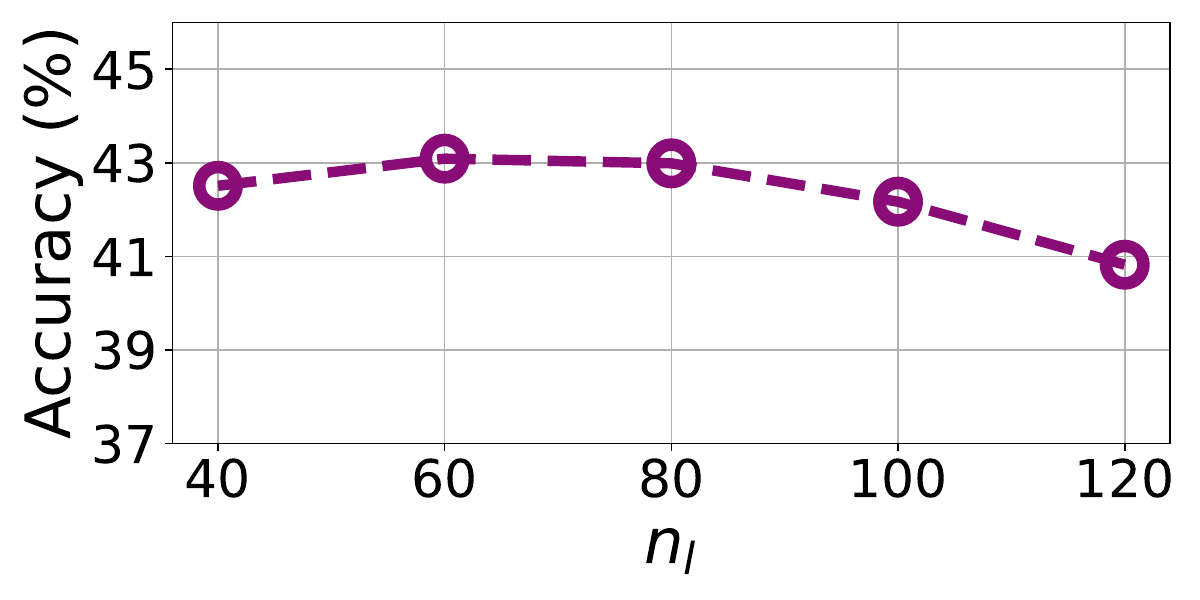}
  \label{fig:exp-b2}
  }
\hfil
  \subfloat[CIFAR-10]{
  \includegraphics[width=4cm,height=2cm]{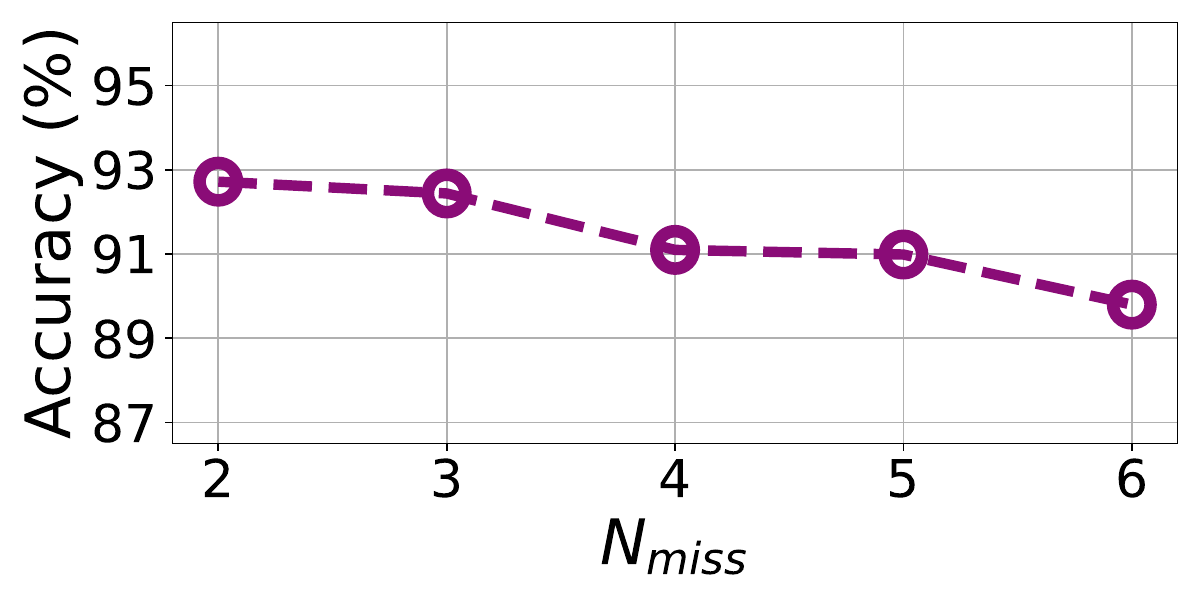}
  \label{fig:exp-c2}
  }
  \hfil
  \subfloat[CIFAR-100]{
  \includegraphics[width=4cm,height=2cm]{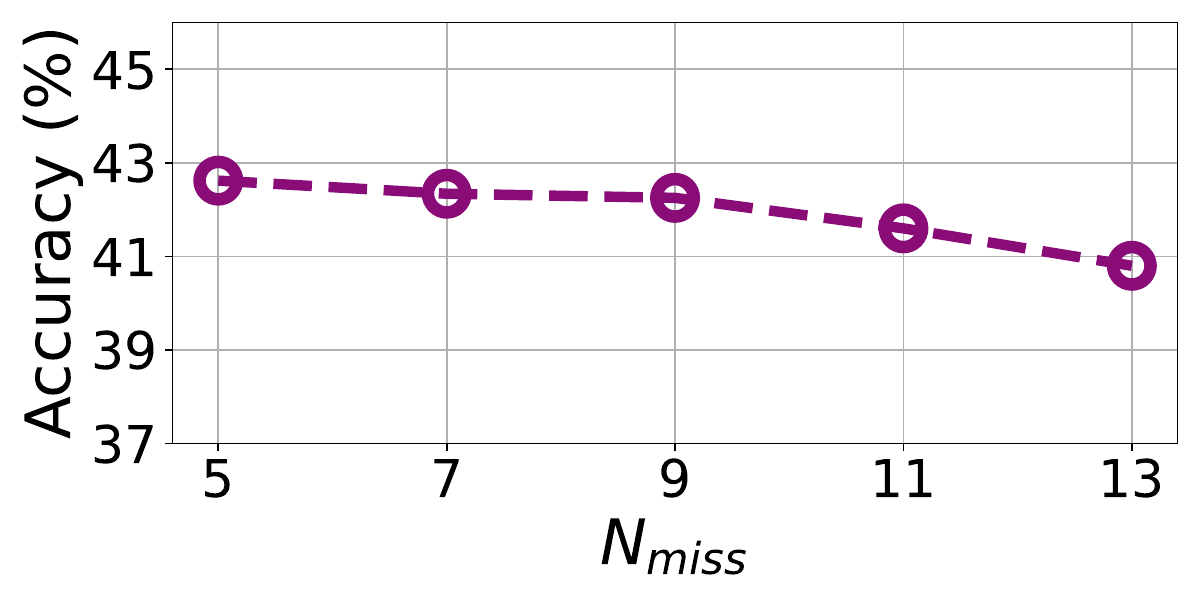}
  \label{fig:exp-d2}
  }
  \caption{Ablations on pseudo-labeled data sampling. (a) and (b): Clustering accuracy with various $n_{l}$ (the default value used in ASD is $4k$). (c) and (d): Clustering accuracy with various numbers of missing classes (denoted as $N_{miss}$) in $D_{l}$.} 
  \label{fig:exp2}
  \vskip 0in
\end{figure}

\begin{figure*}[t]
  \centering
  \subfloat[CIFAR-10]{
  \includegraphics[width=3.7cm,height=3.5cm]{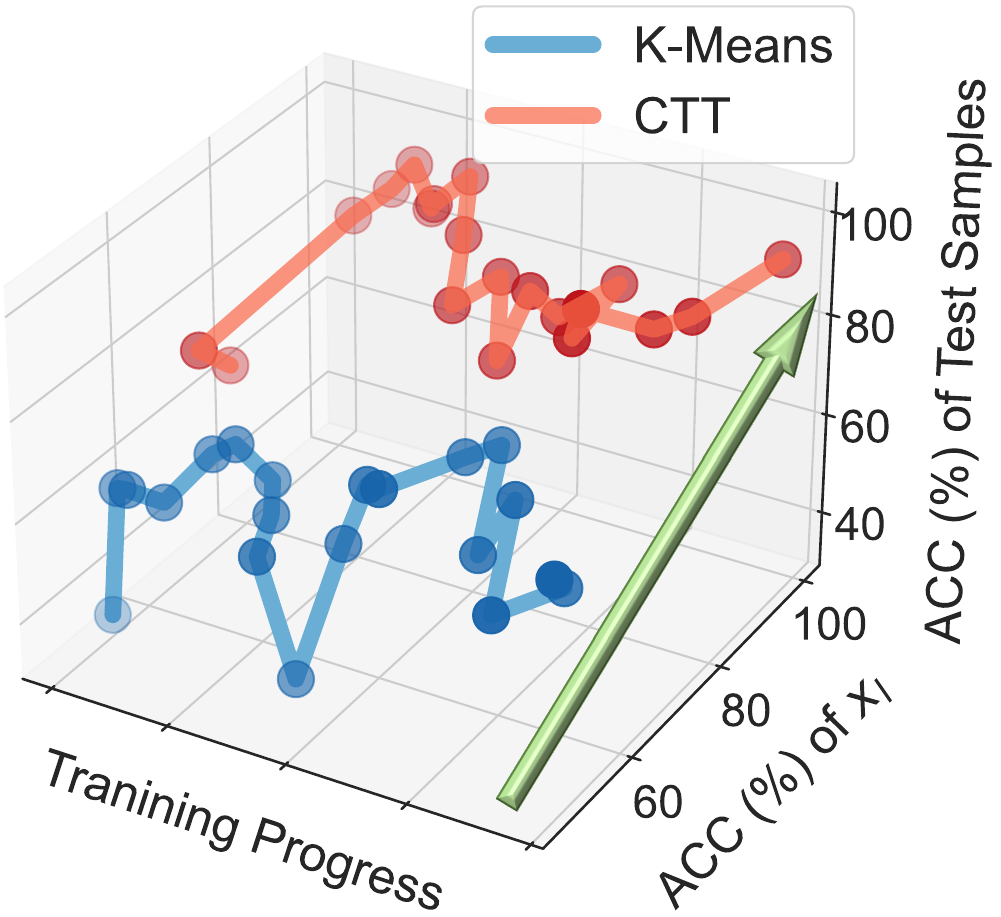}
  \label{fig:exp-a}
  }
\hfil
  \subfloat[CIFAR-100]{
  \includegraphics[width=3.7cm,height=3.5cm]{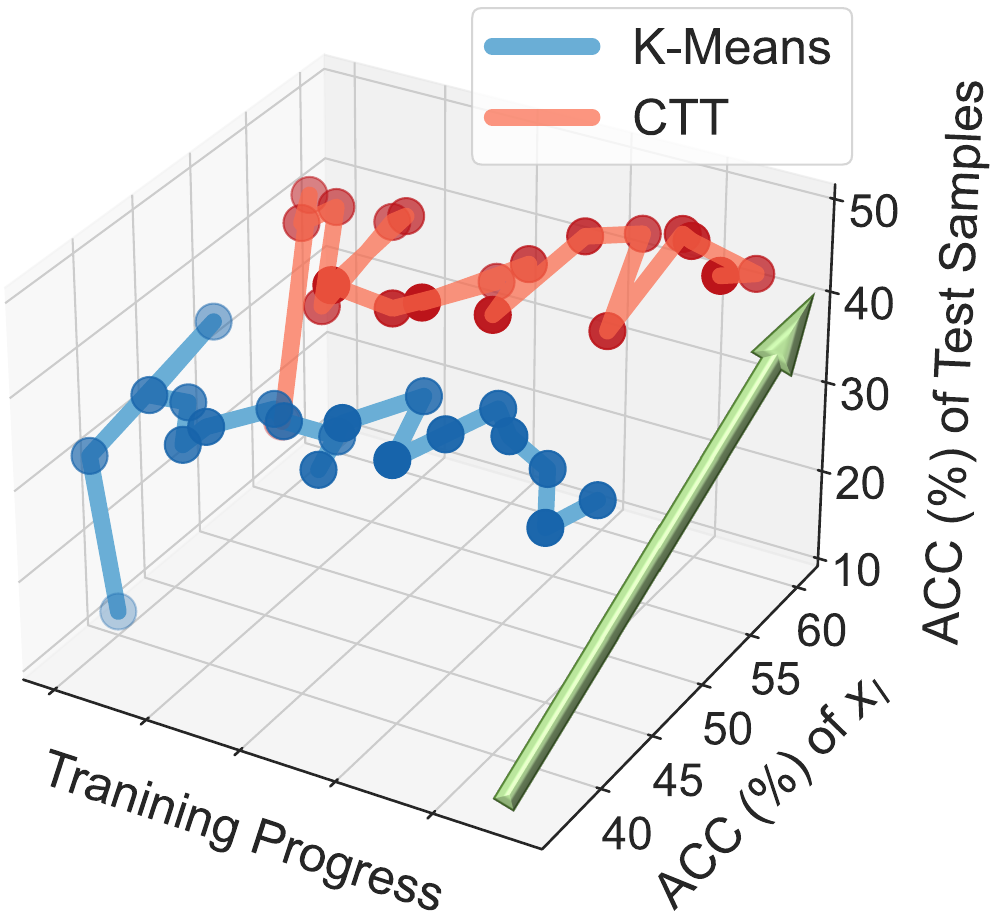}
  \label{fig:exp-b}
  }
\hfil
  \subfloat[$F(x^{(i)}_{l})$]{
  \includegraphics[width=3.5cm,height=3.5cm]{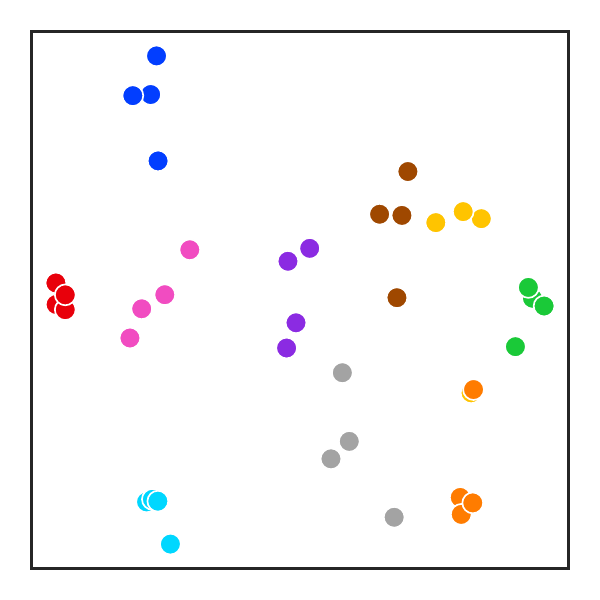}
  \label{fig:exp-c}
  }
  \hfil
  \subfloat[$\texttt{Norm}(\mathbf{C}')$]{
  \includegraphics[width=3.5cm,height=3.5cm]{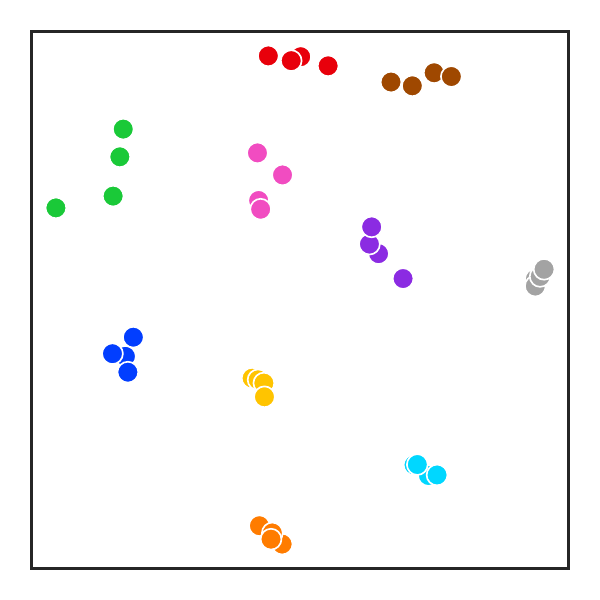}
  \label{fig:exp-d}
  }
  \caption{Experimental comparisons between K-Means and CTT based label assignment for $x^{(i)}_{l}$. (a) and (b): Clustering accuracy of $x^{(i)}_{l}$ and test samples on CIFAR-10/-100. \textcolor[RGB]{130,179,102}{Green arrow} directions indicate better performance. (c) and (d): We visualize $F(x^{(i)}_{l})$ (\ie, features) in (c) and normalized $\mathbf{C}'$ (\ie, class transition matrix) in (d) by t-SNE \cite{van2008visualizing} on CIFAR-10, where different colors represent different ground-truth classes.} 
  \label{fig:exp}
  \vskip 0in
\end{figure*}

\subsubsection{Implementation Details}
\label{sec:imde}
 For DC-framework-free ASD, multiple advanced SSL learners are adopted to comprehensively evaluate the performance of ASD, including MutexMatch \cite{duan2022mutexmatch} and FreeMatch \cite{wang2023freematch}. In the context of MutexMatch, $\mathcal{L}_{sup}$ is the standard cross-entropy loss  (Eq. (2) in \cite{duan2022mutexmatch}), $\mathcal{L}_{unsup}$ is mutex-based consistency loss  (Eqs. (4) and (5) in \cite{duan2022mutexmatch}) and $\phi_{p}$ is the combination of hard and soft pseudo-labeling strategies (Sec. C in \cite{duan2022mutexmatch}). In the context of FreeMatch, $\mathcal{L}_{sup}$ is the standard cross-entropy loss (Eq. (3) in \cite{wang2023freematch}), $\mathcal{L}_{unsup}$ is consistency loss with self-adaptive local and global threshold (Eqs. (8) and (11) in \cite{wang2023freematch}), and $\phi_{p}$ is the combination of hard pseudo-labeling strategies (Sec. 3 in \cite{wang2023freematch}). We follow the original MutexMatch/FreeMatch for the same training setting (\eg, batch size $B=512$) on CIFAR-10/-100 and STL-10. For ImageNet-10/-Dogs, which were not utilized in these two studies, we employ the same training parameters as those used for STL-10. For DC-framework-embedded ASD, our evaluations are primarily conducted on existing DC methods that have already integrated SSL, which can better reflect the superiority of ASD over their original SSL combination strategy. Specifically, we replace their combination strategies with ASD: for SCAN, confidence-based self-labeling (Sec. 2.3 in \cite{van2020scan}) is replaced; for RUC, the mixed sampling strategy (Sec. 3.1 in \cite{park2021improving}) is replaced; and for SPICE, the reliable pseudo-labeling strategy (Sec. III.C in \cite{niu2022spice}) is replaced. We adhere to the original SSL learners used in their methods, \ie, MixMatch \cite{berthelot2019mixmatch} in RUC and FixMatch \cite{sohn2020fixmatch} in SPICE. Specifically, as SCAN does not explicitly use an existing SSL learner, we incorporate FixMatch, which shares similarities with its self-learning strategy. For the training settings, we follow their original literature. Specifically, in subsequent experiments, unless otherwise stated, we use ``ASD+X'' to indicate that the ASD framework is applied on top of a base SSL learner X. Conversely, ``Y+ASD'' denotes that the component connecting the deep clustering and SSL parts in an SSL-embedded DC framework Y is replaced with our proposed ASD module.
 
 Following \cite{van2020scan,dang2021nearest,park2021improving}, we mainly adopt ResNet-18 \cite{he2016deep} for backbone.
 For the number of pseudo-labeled data $n_{l}$, we empirically set $n_{l}=4k$, \ie, $40,80,40,40$ and $60$ for CIFAR-10, CIFAR-100, STL-10, ImageNet-10 and ImageNet-Dogs, respectively. For other algorithm dependent hyper-parameters, we set $N_{b}=1000$ (the number of tracked batches) and $N_{t}=1000$ (the update frequency of $\phi_{l}$) for all datasets. We adopt $k$-Medoids clustering algorithm \cite{kaufman2009finding} and min-max normalization for $\texttt{CluAlg}_{k}(\cdot)$ and $\texttt{Norm}(\cdot)$, respectively. Our models are implemented by PyTorch \cite{paszke2019pytorch} and trained on 6 GeForce RTX 3090 GPUs. The efficiency analysis can be found in Sec. \ref{sec:time}.

\noindent\textbf{Remark.} 
Since baselines methods adopt inconsistent backbone networks for evaluation, to ensure fairness, we maintain ResNet-18 for our strongest competitors, including NNM \cite{dang2021nearest}, CC \cite{li2021contrastive}, ProPos \cite{huang2022learning}, HaDis \cite{zhang2024deep}, SCAN \cite{van2020scan}, RUC \cite{park2021improving}, and SPICE \cite{niu2022spice}. However, the original CC and ProPos use ResNet-34; thus, we refer to the results reproduced by HaDis using ResNet-18. Specifically, SPICE evaluates using both ResNet-18 (on partial datasets we used) and ResNet-34. For fairness, we assess SPICE-embedded ASD with ResNet-34, whereas the comparisons using ResNet-18 appear in Tab. \ref{tab:appdc}.

\subsection{Main Results}
\label{sec:res}
\subsubsection{Clustering Performance Comparison}
The main comparison results for clustering are summarized in Tab. \ref{tab:cadr}. 
We emphasize that the core purpose of ASD is to {cold-start SSL learners for DC} without containing any modules specifically designed for clustering. However, it's noteworthy that the application of \textit{contrastive learning} significantly enhances the performance of DC frameworks \cite{zhou2022comprehensive,ren2022deep}, such as SPICE \cite{niu2022spice}, ProPos \cite{huang2022learning}, HaDis \cite{zhang2024deep}. Therefore, we separate the baseline methods into those using and not using contrastive learning for a fairer comparison. Without contrastive learning, independent ASD consistently achieves higher performance than baseline methods across all datasets, benefiting from the strong learning ability of the SSL model under the effective supervision provided by our ASD. Additionally, FreeMatch-embedded ASD shows more superior performance than MutexMatch-embedded ASD, demonstrating that future, more advanced SSL methods will continue to enhance ASD.

With contrastive learning, DC-framework-embedded ASD still outperforms the various baseline methods, \eg, our ASD improves ACC, NMI, and ARI by 20.3\%, 12.6\%, and 14.4\% respectively over the most recently results reported by HaDis \cite{zhang2024deep} on STL-10. Moreover, compared to SSL-embedded DC methods, using ASD to bridge SSL learners consistently enhances performance in most scenarios. These results proves the efficacy of ASD surpasses previous SSL connection strategy pseudo-labeled data generation with class transition tracking. ASD logs the learning from unlabeled data and scientifically assigns cluster-level labels to pseudo-labeled data, thereby enhancing the clustering potential of SSL.  On ImageNet-10, while ASD does not exceed the originally reported SPICE results, it consistently improves over our reproduced SPICE baseline (\eg, ACC: 94.7$\rightarrow$95.2). Since SPICE have achieved very strong performance, slight fluctuations due to implementation or environment differences are expected.

We observe ASD$_{\text{PS}}$ that adopting PS leads to further boost the performance of ASD. Random sampling may lead to suboptimal pseudo-labels with limited class coverage, potentially affecting the quality of early-stage supervision and causing performance fluctuations. By contrast, PS enhances the representativeness of the sampled data resulting more robust performance. For more discussion, please refer to Sec.~\ref{sec:dcp}.

In the original SPICE \cite{niu2022spice}, the backbone  ResNet-18 (the same one mainly used by SCAN \cite{van2020scan}, NNM \cite{dang2021nearest}, RUC \cite{park2021improving} and our ASD) is used for performance evaluation, but experiments were only conducted on CIFAR-10, CIFAR-100 and STL-10. While SPICE performs complete experiments using ResNet-34 on all the datasets used in Tab. \ref{tab:cadr}, therefore, we use ResNet-34 to compare with it in the main text. In order to demonstrate the advantages of our method more comprehensively, we additionally implement ASD based on ResNet-18 and the results are shown in Tab. \ref{tab:appdc}. We can observe that ASD still achieves considerable performance improvements, demonstrating its robustness to backbones.
\begin{figure*}[t]
  \centering
  \subfloat[CIFAR-10]{
  \includegraphics[width=4cm,height=3cm]{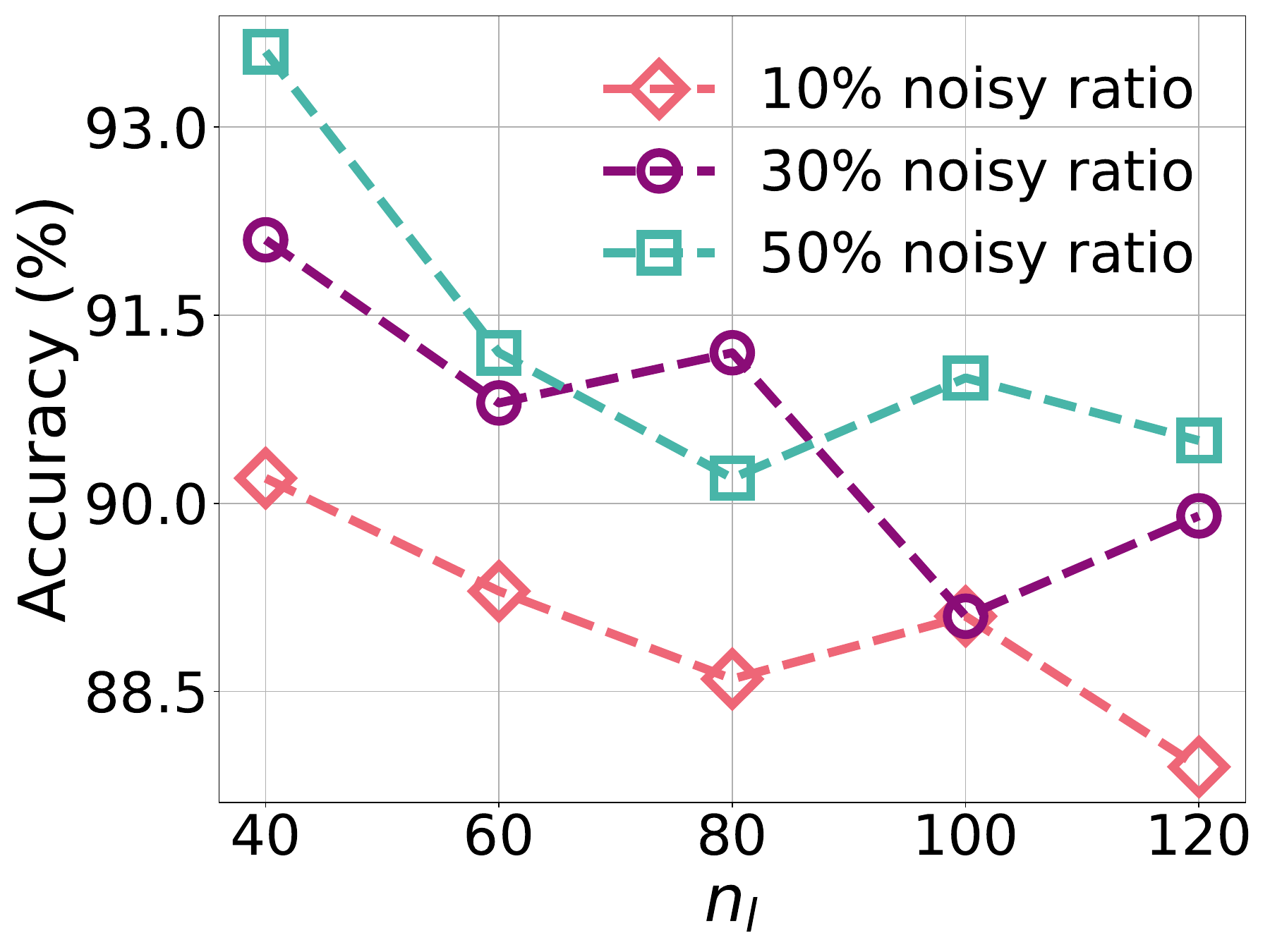}
  \label{fig:noisy_r-1}
  }
\hfil
  \subfloat[CIFAR-100]{
  \includegraphics[width=4cm,height=3cm]{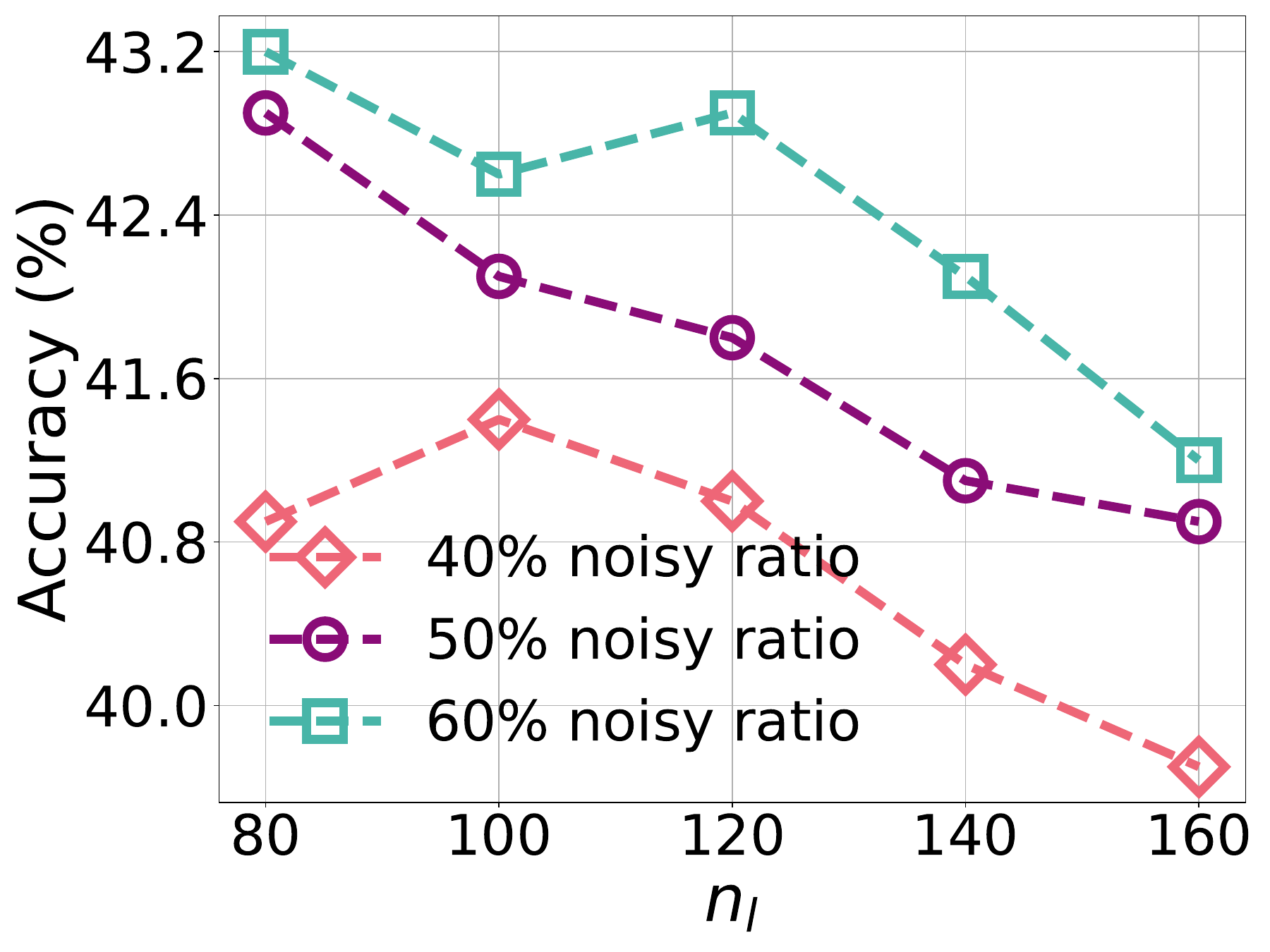}
  \label{fig:noisy_r-2}
  }
\hfil
  \subfloat[CIFAR-10]{
  \includegraphics[width=4cm,height=3cm]{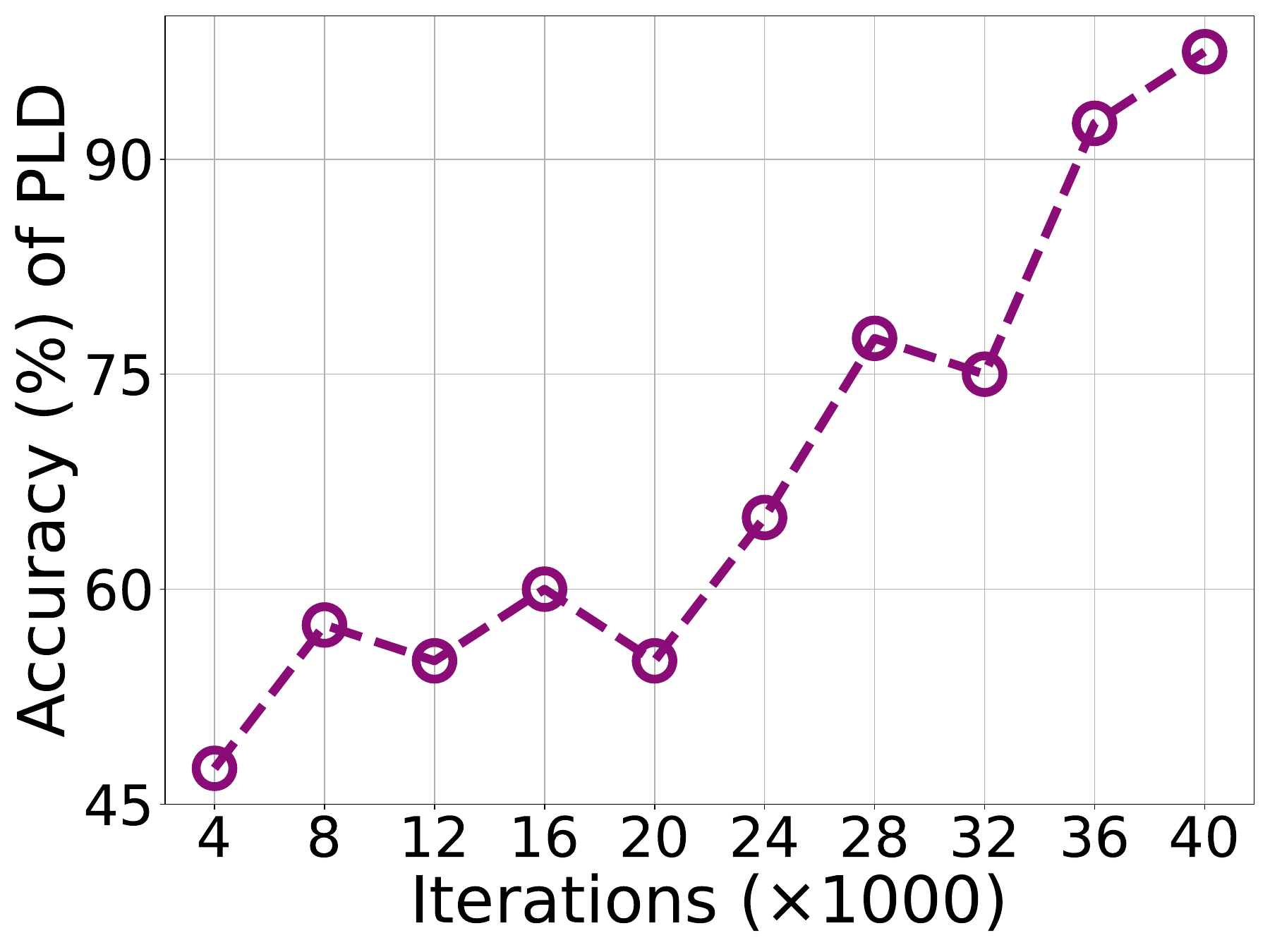}
  \label{fig:noisy_r-3}
  }
  \hfil
  \subfloat[CIFAR-100]{
  \includegraphics[width=4cm,height=3cm]{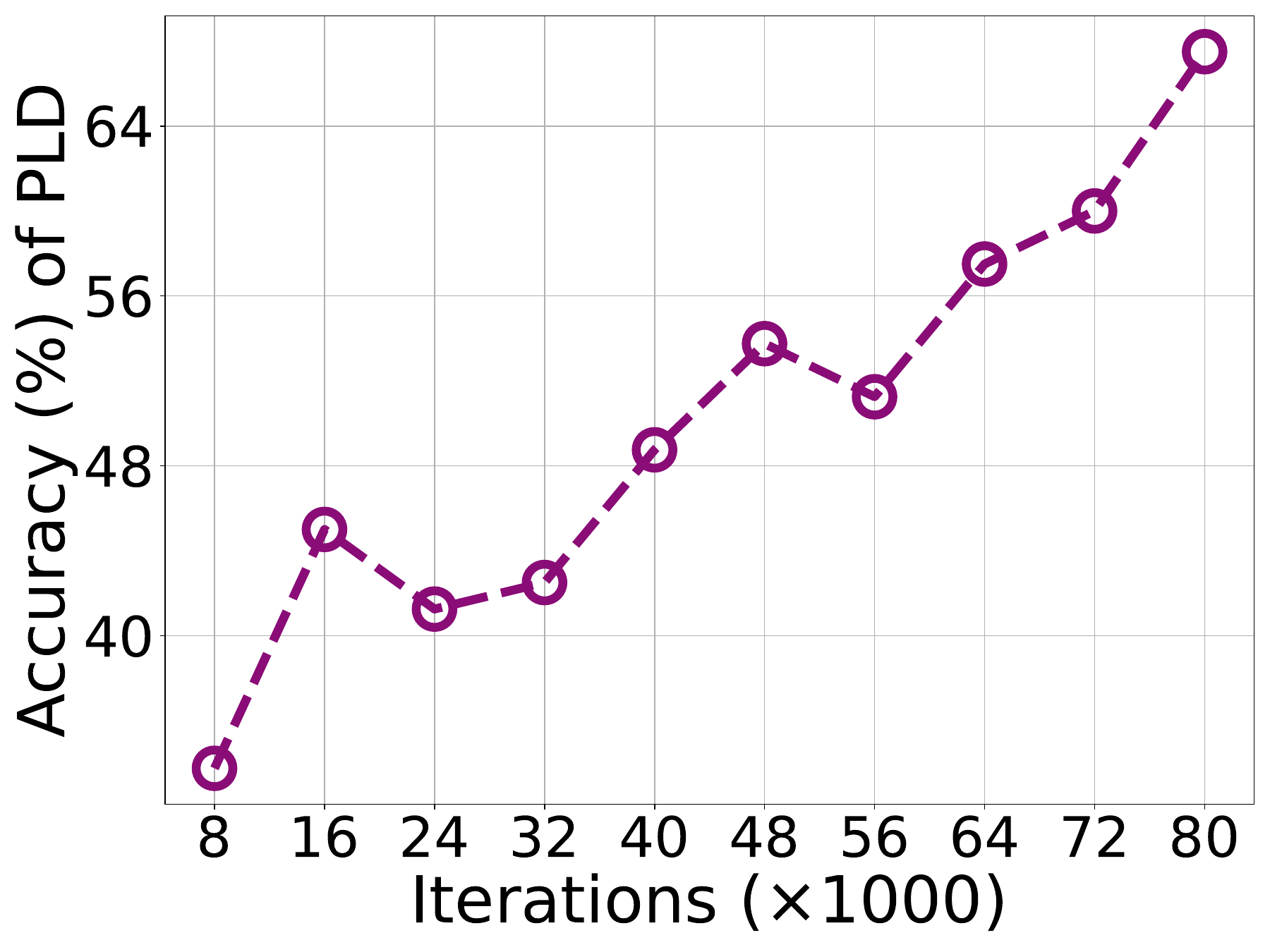}
  \label{fig:noisy_r-4}
  }
  \caption{(a) and (b): Ablation study under fixed noise ratios with varying pseudo-labeled data quantities. (c) and (d): Estimated noise rates of pseudo-labeled data (PLD) tracked across training iterations.} 
  \label{fig:noisy_r}
  \vskip 0in
\end{figure*}

\subsubsection{Comparisons with SSL Methods}
In addition, we provide further comparisons with SSL approaches in Tab. \ref{tab:con}. When using the same number of labels (\ie, $n_{l}$ in the context of ASD), as an unsupervised method, ASD achieves superior performance over the SSL methods used in RUC \cite{park2021improving} and SPICE \cite{niu2022spice} (\ie, MixMatch \cite{berthelot2019mixmatch} and FixMatch \cite{sohn2020fixmatch}) without any ground-truth labels. This indirectly confirms that our method successfully harnesses the potential of SSL learners to address deep clustering problems. Moreover, ASD obtains results comparable to the base SSL learners employed by ASD. Although there is still a performance gap, it's inevitable due to the noise label allocation in ASD's pseudo-label generation strategy. Namely, the baseline SSL learner's performance represents the possible room for the improvement of ASD's performance, and narrowing this gap with the baseline SSL learners will be our future goal.

\begin{table}[t!]
  \caption{Ablation study on $N_b$ conducted on CIFAR-10.
  }
  \label{tab:appab1}
  \vskip 0in
    \centering
    \footnotesize
     \begin{tabular}{@{}cccccc@{}}      
      \toprule
      $N_b$ & 100  & 200 & 500  & 1000 & 2000   \\ \cmidrule(){1-1}  \cmidrule(l){2-6}   
      Accuracy (\%)              & {90.2} & {91.1}& \underline{92.2} & \textbf{92.6} & 91.9\\ \bottomrule
      \end{tabular}
\end{table}

\begin{table}[t!]
  \caption{Ablation study on $N_t$ conducted on CIFAR-10.
  }
  \label{tab:appab2}
  \vskip 0in
    \centering
    \footnotesize
     \begin{tabular}{@{}cccccc@{}}      
      \toprule
      $N_t$ & 100  & 200 & 500  & 1000 & 2000   \\ \cmidrule(){1-1}  \cmidrule(l){2-6}   
      Accuracy (\%)              & \underline{92.4} & {91.5}& {91.8} & \textbf{92.6} & 90.8\\ \bottomrule
      \end{tabular}
\end{table}

In Tab. \ref{tab:con}, we compare with SSL methods using the same number of ground-truth labels as default $n_l=4k$ in ASD. In order to evaluate the performance of ASD more convincingly, we compare it with SSL methods using different numbers of ground-truth labels in Tab. \ref{tab:appssl}. It is worth noting that in general, the more ground-truth labels used by SSL methods, the better the performance. But this means higher requirements for manual annotation. ASD can achieve similar performance to them without any labels at all. In addition, when the number of labels used is very small (\eg, 10 labels), the SSL algorithms are even weaker than ASD. This shows that the fixed and extremely scarce supervision is even worse than the self-constructed dynamic supervision provided by ASD (although it contains noise), which further reflects the effectiveness of ASD.

\subsection{Ablation Studies} 
\label{sec:ab}
\subsubsection{Ablation on Pseudo-Labeled Data Sampling}
As shown in Figs. \ref{fig:exp-a2} and \ref{fig:exp-b2}, choosing a smaller $n_{l}$ appropriately would be more beneficial for ASD, which confirms our statement in Sec. \ref{sec:ppp}: although a larger $n_{l}$ can make us more confident in sampling all semantic classes for $D_{l}$ without knowing any prior, which allows the SSL learner to see the most comprehensive supervised signal, excessive $n_{l}$ inevitably introduces more noise and damages performance. Therefore, carefully weighing the size of $n_{l}$ will be more helpful in tapping into the potential of SSL learners. Meanwhile, re-sampling strategy in a epoch ensures that when $n_{l}$ is relatively small, $D_{l}$ can still cover the entire class space.

\begin{table*}[t!]
  \caption{
Comparison with recent advanced deep clustering paradigms that incorporate pretrained models and external supervision. We report the results of MutexMatch-based ASD$_{\text{PS}}$ using the same experimental settings as \cite{liu2024interactive} and  \cite{adaloglou2023exploring} to ensure fairness, and directly quote the reported results in them for comparison.}
  \label{tab:pre}
  \vskip 0in
    \centering
    \footnotesize
\begin{tabular}{@{}lccccccccccc@{}}
\toprule
  Datasets& \multirow{2}{*}{Backbone} & \multicolumn{3}{c}{CIFAR-10} & \multicolumn{3}{c}{CIFAR-100} & \multicolumn{3}{c}{STL-10} \\
\cmidrule(r){1-1} \cmidrule(lr){3-5} \cmidrule(lr){6-8} \cmidrule(lr){9-11} 
 Metrics & &ACC & NMI & ARI & ACC & NMI & ARI & ACC & NMI & ARI \\
\midrule
IDC \cite{liu2024interactive} w. TCL \cite{li2022twin} & ResNet-34& 92.7 & 84.4& 84.8&{69.4}& 58.1  &48.7&92.7& 85.3 & 84.6 \\
IDC \cite{liu2024interactive} w. ProPos \cite{huang2022learning} & ResNet-34 & 95.7 &  90.5& 90.9&\underline{78.3}& \underline{69.2} & \textbf{61.4}& - &- &-\\
IDC \cite{liu2024interactive} w. ASD (Ours) & ResNet-34 & 95.8 &  90.2& 90.1&\textbf{80.6}& \textbf{70.0} & \underline{58.5}& 94.1 &86.8 &85.2\\ \midrule
 TEMI  \cite{adaloglou2023exploring} w. DINO \cite{caron2021emerging}  & ViT-B/16  & 94.5 & 88.6 & 88.5 & 63.2 & 65.4 & {48.9} & \textbf{98.5} & \textbf{96.5} & \textbf{96.8} \\
 ASD (Ours) w. DINO \cite{caron2021emerging} & ViT-S/16 & \underline{96.7} & \underline{92.9} & \underline{92.4}& 62.2 & 63.9 & 47.7 & 93.4 & 81.5 & 72.6 \\
 ASD (Ours) w. DINO \cite{caron2021emerging} & ViT-B/16 & \textbf{96.9} & \textbf{93.0} & \textbf{92.5} & 65.3 & {66.6} & 48.8 & \underline{97.5}& \underline{89.6} & \underline{86.2}\\
\bottomrule
\end{tabular}
\end{table*}

\begin{table*}[t!]
  \caption{We evaluate MutexMatch-based ASD under different combinations of the PS strategy and pretraining, as well as under different K-Means initialization schemes. For pretraining, we subsequently follow up on \cite{van2020scan} by using the pretrained network weights officially provided by them for the corresponding three datasets.}
  \label{tab:ps}
  \vskip 0in
    \centering
    \footnotesize
    \setlength{\tabcolsep}{2.8mm}{
\begin{tabular}{@{}lccccccccccccccc@{}}
\toprule
Datasets                  &   \multicolumn{3}{c}{CIFAR-10}      &  \multicolumn{3}{c}{ CIFAR-100}   &    \multicolumn{3}{c}{ STL-10}   \\ \cmidrule(r){1-1} \cmidrule(lr){2-4} \cmidrule(lr){5-7} \cmidrule(lr){8-10} 
Metrics                   & ACC  & NMI      & ARI  & ACC  & NMI       & ARI  & ACC  & NMI    & ARI \\ \midrule
 ASD           & 92.6$\pm$3.0 &75.0$\pm$9.2     & 61.9$\pm$11.4 & 40.2$\pm$3.5 & 38.5$\pm$5.6      & 22.4$\pm$4.4 & 74.2$\pm$4.0 & 62.6$\pm$3.8     & 55.3$\pm$3.5  \\
  ASD w. pretraining           & \underline{93.2$\pm$3.1} &84.2$\pm$4.6     & 81.3$\pm$3.3 & 41.8$\pm$3.8 & 40.1$\pm$4.1      & 22.3$\pm$5.0 & \underline{76.7$\pm$5.2 }& \underline{64.0$\pm$6.4 }    & \underline{56.9$\pm$4.9}  \\ \midrule
  ASD$_{\text{PS}}$     &      93.1$\pm$1.5 & \underline{85.9$\pm$2.1 }&\underline{85.2$\pm$1.8}& \underline{43.6$\pm$1.9}& \underline{40.7$\pm$2.0}& \underline{23.7$\pm$2.1}& 75.9$\pm$3.6& 62.9$\pm$2.9 &56.4$\pm$2.8 \\
 ASD$_{\text{PS}}$  wo. updates          & 83.4$\pm$8.2 &61.0$\pm$12.6     & 52.3$\pm$10.5 & 32.7$\pm$8.5 & 28.6$\pm$8.6      & 18.4$\pm$4.9 & 70.5$\pm$5.9 & 57.3$\pm$4.8     & 51.0$\pm$3.2  \\
 ASD$_{\text{PS}}$ w. random     &       \textbf{93.1$\pm$1.9} & {84.1$\pm$2.9 }&{82.1$\pm$2.8}& {39.6$\pm$3.6}& {37.5$\pm$6.5}& {23.5$\pm$3.9}& {73.9$\pm$5.2}& {62.6$\pm$3.7} &{54.4$\pm$4.4} \\ 
 ASD$_{\text{PS}}$  w. pretraining         & \textbf{94.1$\pm$1.9} & \textbf{87.2$\pm$0.6} & \textbf{85.3$\pm$1.1} & \textbf{55.8$\pm$1.0} & \textbf{54.4$\pm$2.3} & \textbf{38.3$\pm$0.8} & \textbf{77.1$\pm$2.6} & \textbf{65.3$\pm$2.2} & \textbf{60.6$\pm$1.3}  \\ 
\bottomrule
\end{tabular}}
\end{table*}

\begin{figure}[t]
  \centering
  \subfloat[Random sampling]{
  \includegraphics[width=\linewidth]{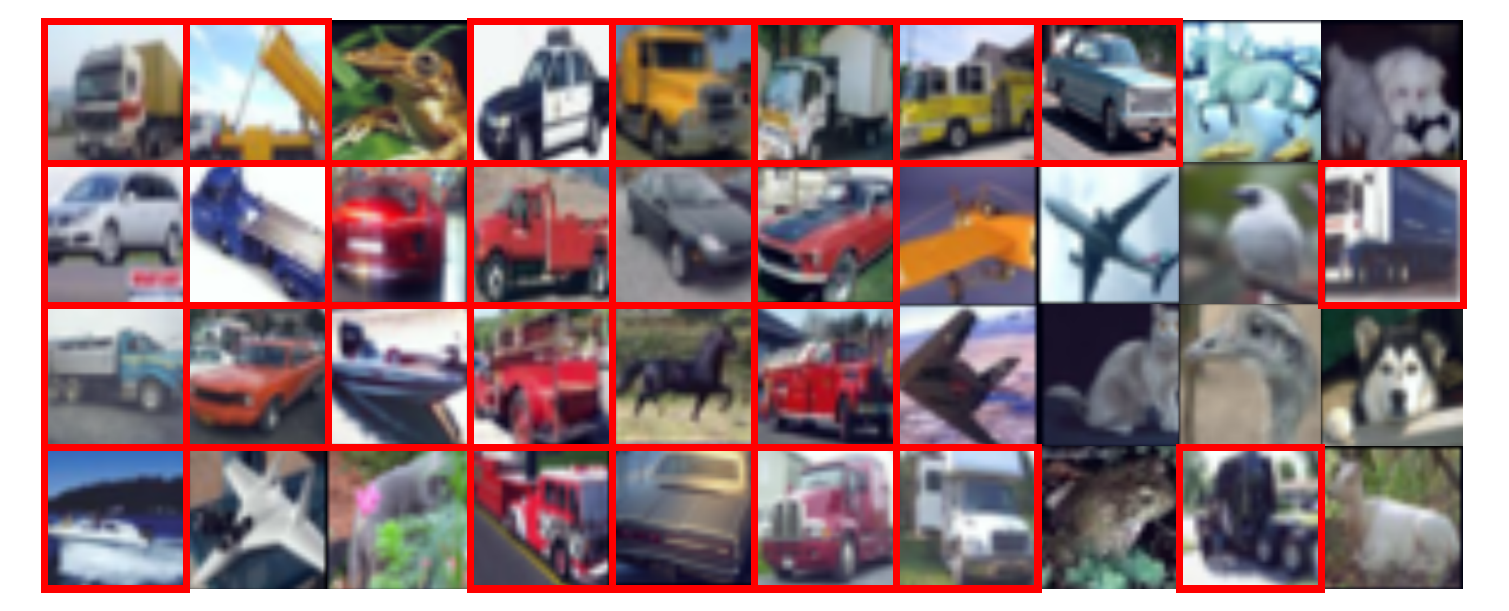}
  \label{fig:ps1}
  }
\hfil
  \subfloat[\textbf{P}rototypes accompanied by neighbors based \textbf{S}ampling
(PS)]{
  \includegraphics[width=\linewidth]{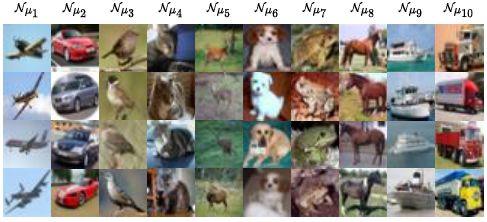}
  \label{fig:ps2}
  }
  \caption{Visualization of pseudo-labeled data sampled from CIFAR-10. (a): A failure case. We highlight in red boxes several visually similar trucks, automobile, and ships. Unfortunately, in this particular round of random sampling, these categories are overrepresented, failing to achieve a balanced semantic coverage and lacking sufficient representativeness. (b):  Each column corresponds to the set of nearest neighbors $\mathcal{N}{\mu^{(i)}}$ associated with a distinct prototype $\mu^{(i)}$ identified via K-Means clustering in the feature space. These neighbors are selected as pseudo-labeled samples to initiate training. We can intuitively observe that these samples tangibly reflect the semantics of the corresponding clusters.}
  \label{fig:ps}
\end{figure}
Next, in order to explore the performance offline of ASD, we need to face the \textbf{worst-case} scenario, which is ASD unfortunately fails to sample all semantic classes for $D_{l}$ in each iteration. We deliberately control to randomly drop $N_{miss}$ classes in each iteration. As shown in Figs. \ref{fig:exp-c2} and \ref{fig:exp-d2}, the performance of ASD still exhibits objective robustness under this setting. Note that in actual situations, there are almost no missing classes. For example, if $n_{l}=40$ is used for CIFAR-10, as shown in Fig \ref{fig:sa}, the probability of containing all classes is about 90\%. Although the SSL learner may not encounter all semantic classes in an iteration, it can still benefit from the class transition matrix containing information with current unseen classes from previous iterations.

\subsubsection{Ablation on Cluster-Level Label Assignment}
 To investigate effectiveness of CTT-based $\phi_{l}$ in ASD, we replaced it with $\phi_{l}(x^{(i)}_{i})=\texttt{KM}_{k}(\{F(x^{(1)}_{l},\cdots,F(x^{(n_{l})}_{l})\})_{i}$, where $\texttt{KM}_{k}(\cdot)$ is the K-Means algorithm clustering samples into $k$ classes based on their features.  As mentioned in Sec. \ref{sec:ctt}, due to the difficulty of cluster matching with sample-level clustering, we simply abandon the resampling strategy and only sample once. As shown in Figs. \ref{fig:exp-a} and \ref{fig:exp-b}, CTT-based $\phi_{l}$ consistently maintains higher clustering accuracy on $x^{(i)}_{l}$, ensuring that $x^{(i)}_{l}$ provides accurate supervision to SSL learners. Additionally, Figs. \ref{fig:exp-c} and \ref{fig:exp-d} also demonstrate that using $\mathbf{C}'$ to represent the similarity between instance-level classes is more discriminative than using the features of pseudo-labeled data in current iteration, as $\mathbf{C}'$ contains historical information and knowledge learned from all unlabeled data. 

Then, we further examine the robustness of ASD under varying noise levels of cluster-level labels obtained by CTT-based $\phi_{l}$ for pseudo-labeled data. We design a controlled simulation where the noise ratio is fixed artificially, while $n_l$ is varied. The results, presented in Figs. \ref{fig:noisy_r-1} and \ref{fig:noisy_r-2}, show that even when the noise ratio remains constant, increasing the absolute number of labeled data tangibly degrades performance. This finding further confirms our discussion in Sec. \ref{sec:plds} regarding the trade-off between pseudo-label quantity and noise sensitivity. It justifies our design choice of using a relatively small value for $n_l$, which not only facilitates semantic coverage and high-quality supervision, but also implicitly controls the magnitude of potential noise introduced per iteration. Moreover, it is important to highlight that in the actual operation of ASD, the noise level is not static—it evolves dynamically across training. As the learned feature representations gradually improve, the quality of clustering and, consequently, the pseudo-labels becomes progressively more reliable.  As shown in Figs. \ref{fig:noisy_r-3} and \ref{fig:noisy_r-4}, the noise rate exhibits a clear downward trend during training, confirming the presence of a self-correcting mechanism in ASD. Through iterative representation refinement and re-sampling, the framework naturally reduces the impact of early-stage noisy assignments and converges toward increasingly accurate supervision. Overall, these results underscore ASD’s robustness under noisy conditions.

\subsubsection{Ablation on Hyper-Parameters} We further conduct ablation experiments on hyper-parameters of ASD. As shown in Tabs. \ref{tab:appab1} and \ref{tab:appab2} (ASD is implemented on MutexMatch \cite{duan2022mutexmatch}), ASD exhibits insensitivity to $N_b$ (the number of tracker batches) and $N_t$ (the update frequency of $\phi_l$). For $N_b$, if $N_b$ is too large, the update intensity of class transition matrix (\ie, $\mathbf{C}$) will be too small, while if $N_b$ is too small, the update intensity will be too large, both of which are not satisfactory. For $N_t$, a larger $N_t$ means that the new knowledge captured by $\mathbf{C}$ will be transmitted to $\phi_l$ more slowly, which is not conducive to label mapping for pseudo-labeled data. Thus, we need choose $N_b$ and $N_t$ with moderate sizes for ASD.

\subsection{Discussions on Pretraining and PS}
\label{sec:dcp}
\subsubsection{Pretraining}
Although ASD is originally designed as an out-of-the-box framework that does not rely on pretraining (unlike works such as \cite{van2020scan, niu2022spice} discussed in Sec.\ref{sec:intro}), it remains fully compatible with advanced pretraining strategies and modern vision backbones. To assess this compatibility, we further evaluate ASD using a stronger feature extractor—Vision Transformer (ViT) \cite{dosovitskiy2020image}—in combination with the state-of-the-art self-supervised pretraining method DINO \cite{caron2021emerging}, following recent best practices \cite{adaloglou2023exploring}. As reported in Tab.\ref{tab:pre}, both the enhanced backbone and pretraining significantly boost clustering performance, demonstrating that ASD can effectively benefit from high-quality feature initialization. Meanwhile, we also observe a recent trend in deep clustering research that goes beyond traditional from-scratch training pipelines. Several emerging approaches shift toward leveraging pretrained models and even minimal external supervision to extract richer semantics. For instance, IDC \cite{liu2024interactive} introduces interactive supervision based on high-value sample selection—incorporating hardness, representativeness, and diversity—to improve clustering decisions with minimal human input. While such strategies offer impressive results, they typically require external signals. 

To ensure a fair comparison, we include recent advanced methods such as TEMI \cite{adaloglou2023exploring} and IDC \cite{liu2024interactive} in Tab.~\ref{tab:pre}. Importantly, although ASD uses a much smaller batch size (32) compared to TEMI (512), it still achieves comparable or even superior performance, particularly on CIFAR-10 and CIFAR-100. Furthermore, we also adopt the clustering network trained by ASD within the IDC framework and observe that it leads to competitive or even improved results, indicating that ASD produces high-quality clustering models that can serve as a strong initialization or component for more elaborate pipelines like IDC. These results affirm that ASD not only performs well under conventional training settings but also scales robustly with modern architectures and pretraining techniques.

\subsubsection{PS}
To better isolate and understand the individual and combined effects of self-supervised pretraining and the PS strategy, we conduct controlled ablation experiments under three representative settings: (1) applying PS without pretraining (ASD$_{\text{PS}}$), (2) applying pretraining without PS (ASD w. SimCLR), and (3) applying both together (ASD$_{\text{PS}}$ w. SimCLR). For all cases involving pretraining, we follow standard practice and use pretrained weights officially provided by \cite{van2020scan} for the corresponding datasets. The results are summarized in Tab.~\ref{tab:ps}, which show performance across three datasets.

Beyond isolating the main factors, we further investigate the robustness of the proposed PS strategy, with a particular focus on two aspects: centroid initialization and iterative sampling for pseudo-labeled data $\mathcal{D}_l$. Regarding the former, we compare the standard \texttt{k-means++} initialization (used in our main experiments) against random initialization (ASD$_{\text{PS}}$ w. random). The results demonstrate that PS maintains broadly consistent performance across both initialization schemes, suggesting limited sensitivity to the choice of seed. As for iterative sampling, we evaluate a variant (ASD$_{\text{PS}}$ wo. updates) in which clustering is performed only once in the first iteration, and the resulting pseudo-labeled samples are fixed for the remainder of training. Compared to our default setting that updates $\mathcal{D}_l$ periodically, this fixed $\mathcal{D}_l$ baseline yields significantly lower accuracy and higher variance. This highlights the necessity of iterative re-sampling: on one hand, as feature representations become increasingly discriminative, PS is able to generate higher-quality pseudo-labels for $\mathcal{D}_l$; on the other hand, periodic updates mitigate the risk of the model being constrained by a suboptimal or biased subset of pseudo-labeled data. By refreshing the training supervision at each iteration, the model can progressively access a broader and more representative sample space. The results in Tab.~\ref{tab:ps} also show the advantages of our iterative re-sampling strategy.

Meanwhile, both PS and pretraining consistently contribute to reducing performance variance. Vanilla ASD shows performance fluctuations, partly due to random sampling. Though simple, it may select unrepresentative or poor-quality samples early on, leading to weak supervision and instability. As shown in Fig. \ref{fig:ps1}, a failure case occurs when overrepresented classes dominate the sampled data, causing ASD to collapse by clustering all samples into the same group. PS deliberately samples pseudo-labeled data centered around diverse and representative prototypes (see Fig. \ref{fig:ps2}), promoting broad semantic coverage and more reliable supervision for robust SSL training. But we have to mention that that despite this, some limitations remain, especially in fine-grained or imbalanced datasets where ensuring full semantic coverage per iteration is challenging without ground-truth labels. In such cases, PS may still focus on a subset of visually similar classes, leading to suboptimal sampling and biased pseudo-label distributions. We provide detailed analysis in the Appendix \ref{app:adl}.

\begin{table}[t!]
  \caption{
Clustering performance comparisons on SVHN \cite{netzer2011reading} in the setting of semi-supervised clustering. To be fair, we directly quote the results reported by \cite{ding2024graph}.}
  \label{tab:sv}
  \vskip 0in
    \centering
    \footnotesize

\begin{tabular}{cccc}
\toprule Method & ACC& NMI& ARI\\
\midrule DEC  \cite{xie2016unsupervised} & 13.7 & 11.1 & 10.6 \\
DAE \cite{vincent2010stacked} & 11.0 & 9.8 & 8.2 \\
VAE \cite{kingma2013auto} & 10.9 & 9.8 & 8.6 \\
DeCNN \cite{zeiler2010deconvolutional} & 9.3 & 9.1 & 7.3 \\
GAN \cite{radford2015unsupervised} & 11.2 & 9.6 & 9.0 \\
JULE \cite{yang2016joint} & 15.2 & 11.1 & 11.4 \\
DAC \cite{chang2017deep} & 16.5 & 11.3 & 13.8 \\
DCCM \cite{wu2019deep} & 14.9 & 11.2 & 11.4 \\
DAFC \cite{tan2023deep} & 17.0 & 11.5 & 14.3 \\
MFCVAE \cite{falck2021multi} & 56.5 & - & - \\
GSDC \cite{ding2024graph}  & 66.4 & 42.5 & 42.1 \\
\rowcolor{GrayMedium} ASD (Ours) & \textbf{72.6} &\textbf{56.2} &\textbf{50.4} \\
\bottomrule
\end{tabular}

\end{table}

\subsection{More Clustering Task Settings}
We further evaluate ASD under the semi-supervised clustering setting, where a small number of labeled samples are available alongside a large amount of unlabeled data. We follow the experimental protocol of GSDC \cite{ding2024graph} and conduct experiments on SVHN \cite{netzer2011reading}.  Note that the labeled data here refers to samples with ground-truth labels, rather than pairwise constraints or other forms of weak supervision. These labels can be used during the supervised training phase but are not accessible during the unsupervised learning process. In the supervised training phase, we directly activate the SSL learner following its original training protocol—meaning we no longer construct pseudo-labeled data. The rest of the ASD pipeline remains unchanged. In the subsequent unsupervised phase, we resume the pseudo-label sampling process. For fair comparison, we adopt the same backbone and directly quote the reported results of existing baselines. As shown in Tab. \ref{tab:sv}, ASD significantly outperforms all prior methods, including recent state-of-the-art approaches. This demonstrates the strong adaptability of ASD to semi-supervised clustering scenarios, even without any modification. Given ASD’s intrinsic alignment with semi-supervised learning frameworks, it can naturally excel in the semi-supervised.

\begin{table}[t]
\centering
\caption{Time complexity and runtime of ASD's main components ---\textit{``PLDS''}: Pseudo-Labeled Data Sampling; \textit{``OT-SA''}: OT-Based Semantic Alignment; \textit{``CTT-LM''}: CTT-Based Label Mapping ($I$ denotes the number of iterations for $k$-medoids).  We report the runtime of one iteration on CIFAR-10 with MutexMatch-based ASD, GeForce RTX 3090 GPU and Intel Xeon Gold 6226R CPU @2.90GHz. ``\textit{Ratio}'' represents the percentage of the runtime relative to the total. }
\footnotesize
 \setlength{\tabcolsep}{3mm}{
\begin{tabular}{@{}c|c|c|c@{}}
\toprule
Component     & Complexity & Runtime & Ratio  \\ \midrule
 PLDS &   $\mathcal{O}(n_l)$   &   0.36ms & 0.15\% \\ 
OT-SA &   $\mathcal{O}(n_l^2) $ &        1.95ms & 0.80\%\\ 
CTT-LM &     $\mathcal{O}(I \cdot k(n_{l}-k)^2) + \mathcal{O}(B)$&   2.68ms & 1.10\% \\ 
Total training &  -    &  243.72ms & -  \\
\bottomrule
\end{tabular}
}
\label{tab:time}
\end{table}

\subsection{Efficiency Analysis}
\label{sec:time}
Denoting the number of sampled pseudo-labeled data as $n_{l}$, the number of clusters as $k$ and the batch size as $B$, we provide the efficiency analysis of ASD in Tab. \ref{tab:time}. For time complexity, PLDS depends on random sampling; CTT-LM mainly depends on $k$-medoids algorithm \cite{kaufman2009finding} conducted on CTT matrix where CTT is a independent procedure with a loop at the time complexity of $\mathcal{O}(B)$ (\ie, it is performed once for each sample in the batch). For OT-SA, we use POT \cite{flamary2021pot} to solve OT with Sinkhorn solver. Thus, its time complexity is nearly $\mathcal{O}(n_l^2)$ \cite{flamary2021pot}, where $n_l$ is the amount of pseudo-labeled data with a relatively small value. As shown in Tab. \ref{tab:time}, {the runtime of OT-SA accounts for only 0.80\% of the total}. Moreover, the complexities of ASD-specific components are mainly depend on a relatively small $n_l$, {contributing only about 2.05\% runtime}. The main time cost of ASD is attributed to the SSL learner, or pretraining in case of DC-embedded ASD.

\section{Conclusion}
In this work, we propose ASD, an adaptor for SSL that enables out-of-the-box clustering. First, we perform random sampling to obtain pseudo-labeled data. By setting up an instance-level classifier trained on the pseudo-labeled data with labels aligned semantically, the model can perform instance-level classification on all unlabeled data. 
Then, with the similarity matrix obtained by tracking the class transitions of instance-level predictions on the unlabeled data, we assign cluster-level labels to pseudo-labeled data. Finally, we leverage the pseudo-labeled data with assigned labels to activate a general SSL learner to learn from unlabeled data for clustering. We believe that our work could continue to evolve with the advancement of SSL and further contribute to deep clustering.

{\footnotesize
\bibliographystyle{IEEEtran}
\bibliography{egbib}
}

\appendices

\begin{figure*}[t]
  \centering
  \resizebox{\linewidth}{!}{   \includegraphics[]{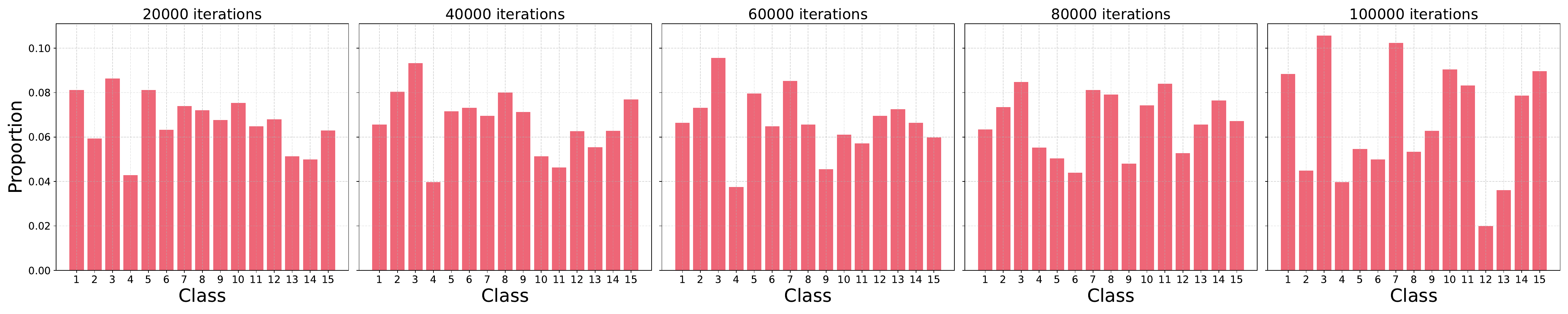}}
  \caption{Visualization of a failure case showing the class distribution of pseudo-labels assigned to unlabeled data during training on Image-Dogs. The distribution remains imbalanced, indicating that the model is consistently influenced by skewed pseudo-labeled supervision.} 
  \label{fig:ps-distri}
\end{figure*}

\section{Proof of Theorem \ref{the}}
\label{app:proof}
\addtocounter{theorem}{-1}
\begin{theorem}
Given $n$ samples with $k$ classes and a uniform class-distribution (\ie, the number of samples per class is $\frac{n}{k}$), the probability of randomly selecting $n_{l}$ samples ($n_{l} \geq k$) containing all $k$ classes $P_{all}(n_{l}, k, n)$ is given by:
\begin{equation}
P_{all}(n_{l}, k, n) = 1 - \sum_{i=1}^{k} (-1)^{(i-1)} \frac{\binom{k}{i} \binom{n - i(\frac{n}{k})}{n_{l}}}{\binom{n}{n_{l}}},
\end{equation}
where $\binom{a}{b}$ represents the number of combinations.
\end{theorem}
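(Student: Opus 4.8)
The plan is to apply the inclusion–exclusion principle to the complementary event. For each $j \in \{1, \dots, k\}$, define $A_j$ to be the event that class $j$ is entirely absent from the randomly chosen $n_l$-subset. Since ``containing all $k$ classes'' is precisely the complement of ``at least one class is missing,'' we have $P_{all}(n_l, k, n) = 1 - P\!\left(\bigcup_{j=1}^k A_j\right)$, so the bulk of the argument reduces to evaluating this union probability.

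First I would compute the probability of a single intersection $\bigcap_{j \in S} A_j$ for a fixed index set $S$ with $|S| = i$. This event holds exactly when all $n_l$ chosen samples avoid every class in $S$, i.e., when they are drawn from the remaining $n - i\left(\frac{n}{k}\right)$ samples, because under the uniform class distribution each excluded class removes exactly $\frac{n}{k}$ samples. Since every $n_l$-subset is equally likely, this probability equals $\binom{n - i(\frac{n}{k})}{n_l} \big/ \binom{n}{n_l}$, adopting the standard convention $\binom{a}{b} = 0$ whenever $a < b$; this convention automatically handles the degenerate case in which too many classes are removed to still accommodate $n_l$ samples.

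The key structural observation is that this probability depends only on the cardinality $i$ of $S$ and not on which particular classes are removed, a direct consequence of the balanced-class hypothesis. Summing over all $\binom{k}{i}$ subsets of size $i$ and invoking inclusion–exclusion then yields
\begin{equation}
P\!\left(\bigcup_{j=1}^k A_j\right) = \sum_{i=1}^{k} (-1)^{i-1} \binom{k}{i} \frac{\binom{n - i(\frac{n}{k})}{n_l}}{\binom{n}{n_l}},
\end{equation}
and substituting into $P_{all} = 1 - P\!\left(\bigcup_j A_j\right)$ produces the claimed formula.

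I expect the only genuinely delicate point to be the symmetry step: one must verify that the number of samples lying outside any collection of $i$ chosen classes is exactly $n - i\left(\frac{n}{k}\right)$, which relies essentially on the uniform distribution assumption and would fail for imbalanced classes. The boundary behavior—terms silently vanishing once $n - i\left(\frac{n}{k}\right) < n_l$—is absorbed cleanly by the binomial-coefficient convention, so no separate case analysis is needed.
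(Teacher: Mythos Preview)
Your proposal is correct and follows essentially the same route as the paper: complement to the event ``at least one class is missing,'' then inclusion--exclusion over the events $A_j$ with the uniform-class assumption making each size-$i$ intersection contribute $\binom{n-i(n/k)}{n_l}/\binom{n}{n_l}$. If anything, your write-up is more explicit than the paper's about the symmetry step and the $\binom{a}{b}=0$ convention for degenerate terms.
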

\begin{proof}
Since the samples of all classes are uniformly distributed, we can obtain each class has $\frac{n}{k}$ samples. We will derive the probability by first calculating the probability of the complementary event—that at least one class is missing.

Let's define a term $P_{term}(n_l,k,n,i)$ representing the $i$-th sum in the Inclusion-Exclusion series. This term corresponds to the sum of probabilities for every possible intersection of $i$ ``missing class'' events. It is given by:
\begin{equation}
P_{term}(n_l,k,n,i) = \frac{ \binom{k}{i} \binom{n - i(\frac{n}{k})}{n_l}}{\binom{n}{n_l}}.
\label{eq:miss}
\end{equation}

By the Principle of Inclusion-Exclusion, the exact probability of having at least one missing class is the alternating sum of the terms defined in Eq. (\ref{eq:miss}):
\begin{align}
P(\text{at least 1 missing class}) &= \sum_{i=1}^{k} (-1)^{(i-1)} P_{term}(n_{l},k,n,i) \nonumber\\
&= \frac{\sum_{i=1}^{k} (-1)^{(i-1)} \binom{k}{i} \binom{n - i(\frac{n}{k})}{n_{l}}}{\binom{n}{n_{l}}}.
\label{eq:7}
\end{align}
 Then, we use the complementary principle, subtracting the probability of having at least one missing class from $1$ to obtain the exact probability of containing all $k$ classes:
\begin{align}
P_{all}(n_{l}, k, n) &= 1 - P(\text{at least 1 missing class}) \nonumber \\
&= 1 - \sum_{i=1}^{k} (-1)^{(i-1)} \frac{\binom{k}{i} \binom{n - i(\frac{n}{k})}{n_{l}}}{\binom{n}{n_{l}}}.
\label{eq:8}
\end{align}
\end{proof}
\begin{figure*}[t]
  \centering
  \resizebox{0.73\linewidth}{!}{   \includegraphics[]{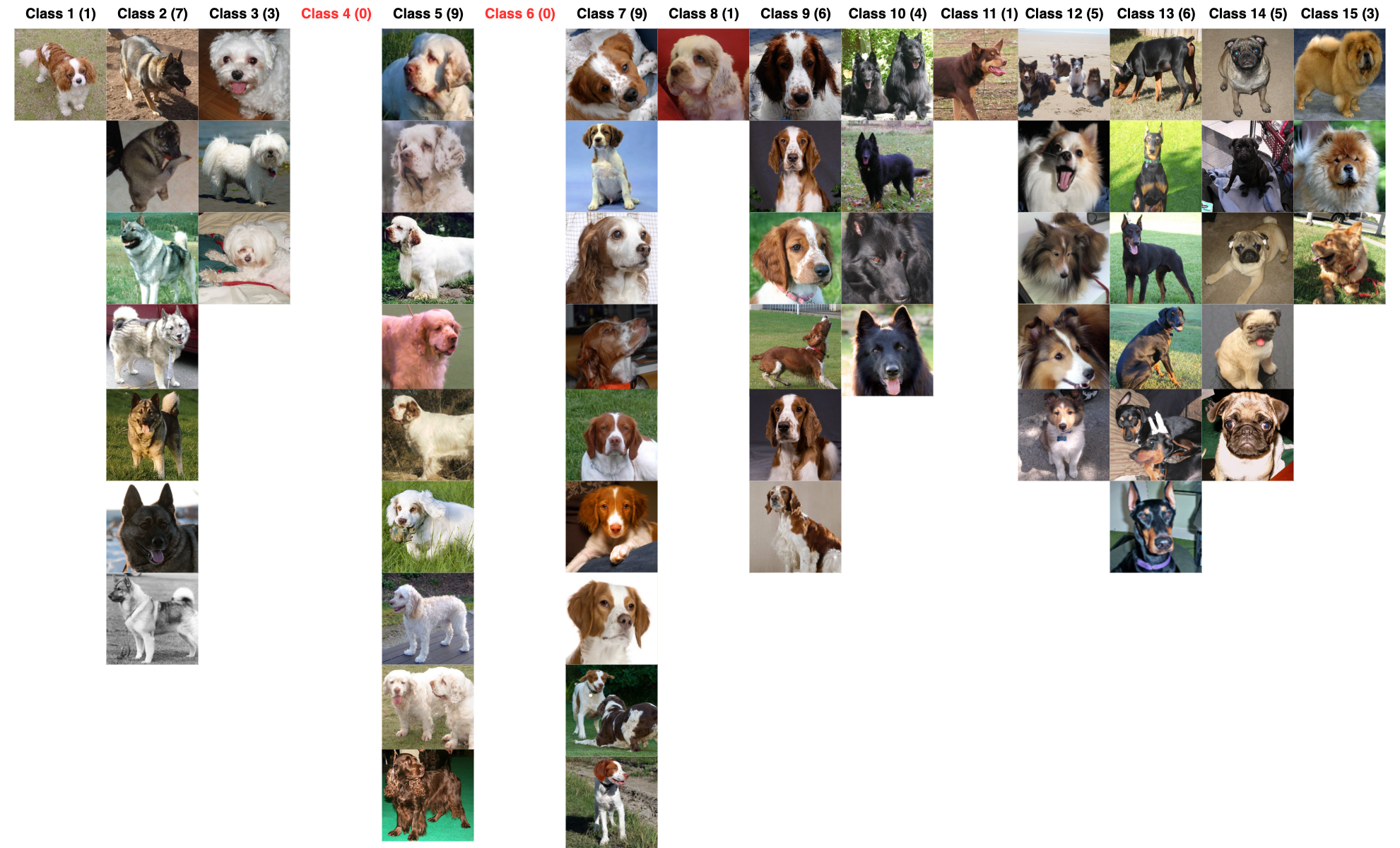}}
  \caption{Visualization of a failure case from ImageNet-Dogs using PS. Each column shows pseudo-labeled samples belonging to each class obtained by PS. While PS improves sample quality overall, some semantic classes (e.g., class 4, 6) remain uncovered after  20,000 training iterations.}
  \label{fig:ps-f}
\end{figure*}
\section{Additional Discussions on Limitations}
\label{app:adl}
While the PS strategy greatly improves training stability and semantic coverage in ASD, some limitations remain, especially on fine-grained or imbalanced datasets. Without ground-truth labels, it is difficult to guarantee that every iteration’s pseudo-labeled set covers all semantic categories. This issue is more pronounced in datasets like ImageNet-Dogs, where classes are visually similar and intra-class variance is small. In such cases, prototype discovery based on shallow or randomly initialized features may produce cluster centroids that represent only a narrow semantic range, causing pseudo-labeled samples to concentrate on a few visually similar classes even after many training iterations (Fig.~\ref{fig:ps-f}). This leads to biased pseudo-label distributions (Fig.~\ref{fig:ps-distri}), where the model overfits overrepresented classes and neglects others, hindering convergence to semantically faithful clusters. Addressing these challenges may require more advanced techniques such as long-tailed-aware reweighting, semantic coverage tracking, and dynamic sampling adjustments based on class imbalance or semantic drift indicators, which will be explored in future work.

\vspace{-2em}
\begin{IEEEbiography}[{\includegraphics[width=1in,height=1.25in,clip,keepaspectratio]{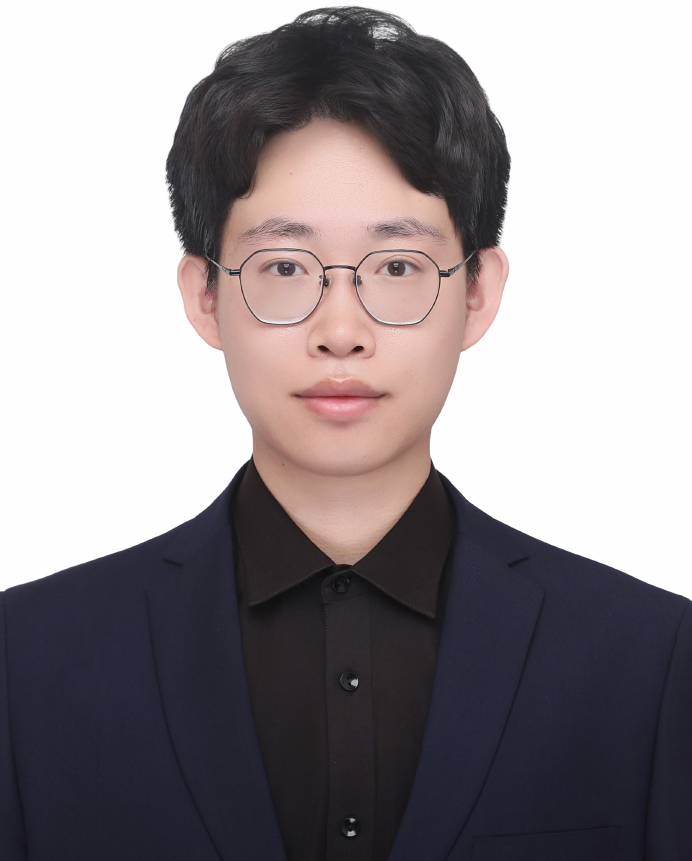}}]{Yue Duan} received the B.Eng. degree from the School of Computer Science and Technology, Harbin Institute of Technology, Weihai, China, in 2017. He is currently pursuing the Ph.D. degree at the School of Computer Science, Nanjing University, China. His research interests include semi-supervised learning, multimodal learning, and large multimodal models.
\end{IEEEbiography}
\begin{IEEEbiography}[{\includegraphics[width=1in,height=1.25in,clip,keepaspectratio]{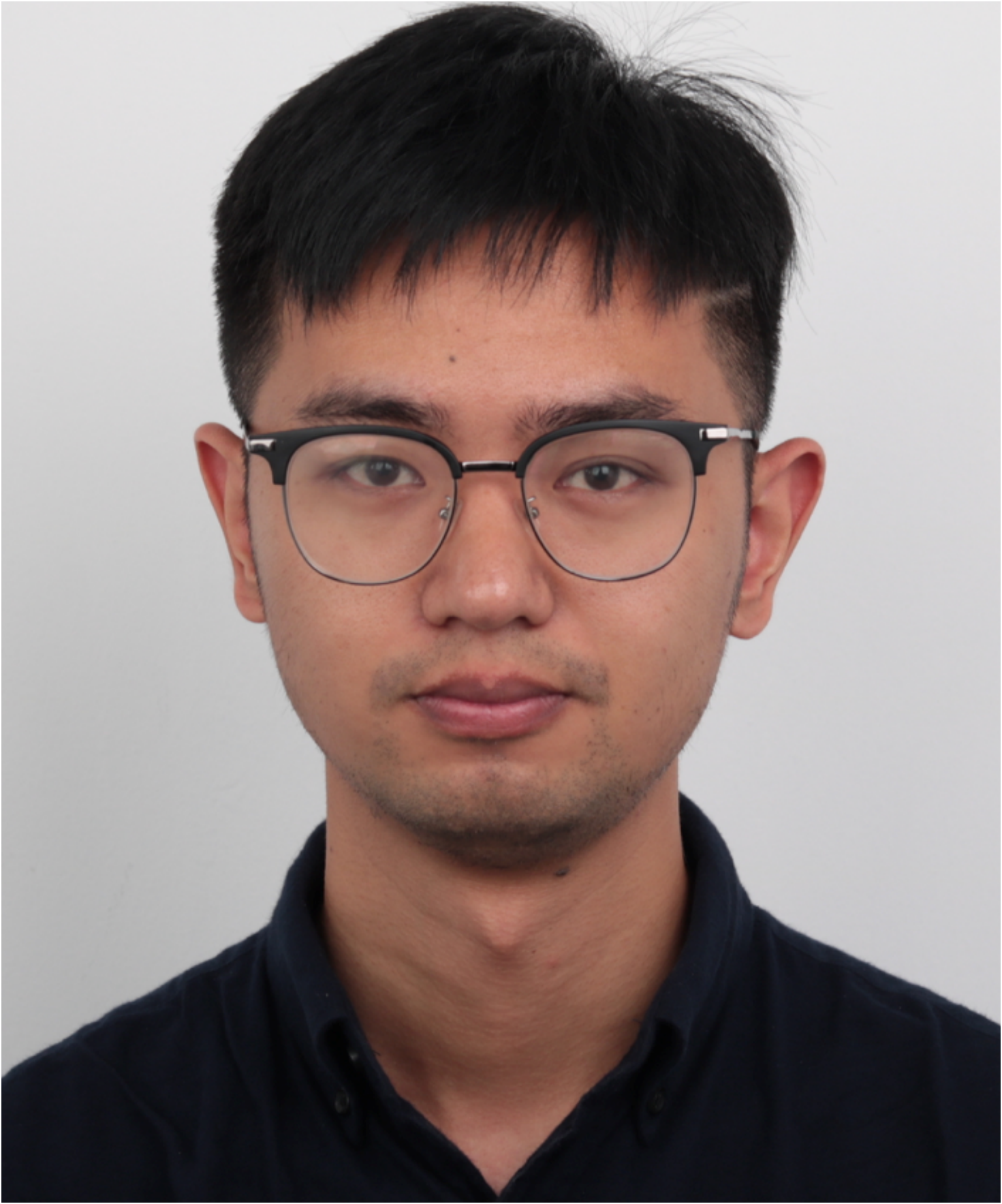}}]{Lei Qi} is currently an Associate Professor with the School of Computer Science and Engineering, Southeast University, China. His current research interests include some ML methods, such as domain adaptation, semi-supervised learning, unsupervised learning, and meta-learning. For applications, he mainly focuses on person re-identification and image segmentation.
\end{IEEEbiography}
\begin{IEEEbiography}[{\includegraphics[width=1in,height=1.25in,clip,keepaspectratio]{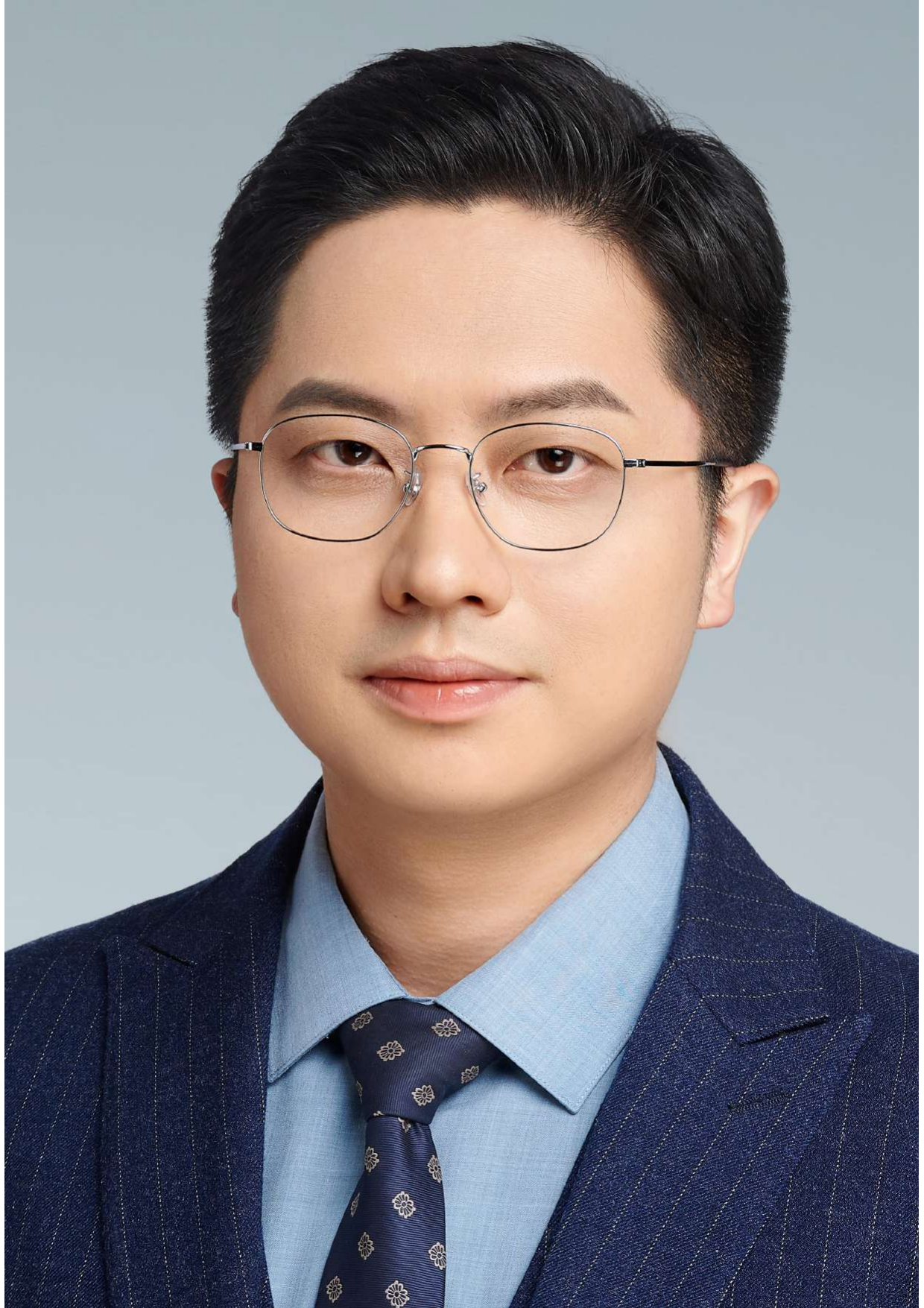}}]{Yinghuan Shi} is currently a Professor at the School of Computer Science, Nanjing University, and he is also affiliated with National Institute of Healthcare Data Science, Nanjing University. He received the B.Sc. and Ph.D. degrees both from Computer Science, Nanjing University, in 2007 and 2013, respectively. His research interests include machine learning, pattern recognition, and medical image analysis. He has published more than 80 papers in CCF-A Conference and IEEE Transactions.
\end{IEEEbiography}
\begin{IEEEbiography}[{\includegraphics[width=0.95in,height=1.25in,clip,keepaspectratio]{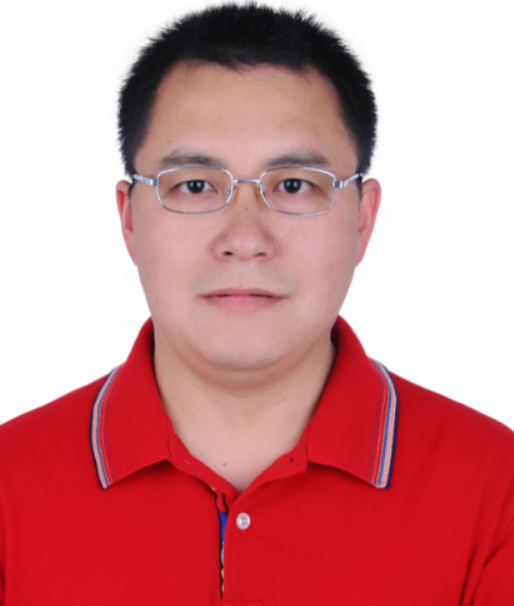}}]{Yang Gao} (Senior Member, IEEE) is a Professor in the School of Intelligence Science and Technology, Nanjing University. He is currently directing the Reasoning and Learning Research Group in Nanjing University. He has published more than 120 papers in top-tired conferences and journals. He also serves as Program Chair and Area Chair for many international conferences. His current research interests include artificial intelligence and machine learning.
\end{IEEEbiography}
\end{document}